\definecolor{myred}{HTML}{800606}
\definecolor{myblue}{HTML}{131D85}
\definecolor{forestgreen}{RGB}{34,159,44}
\definecolor{deepred}{RGB}{235,0,20} 
\theoremstyle{plain}
\newtheorem{theorem}{Theorem}[section]
\theoremstyle{definition}
\theoremstyle{remark}
\newtheorem{remark}[theorem]{Remark}
\definecolor{lightpurple}{rgb}{0.96, 0.96, 1}
\newcommand{\cL}{\mathcal{L}}
\newcommand{\cS}{\mathcal{S}}
\newcommand{\cU}{\mathcal{U}}
\newcommand{\cX}{\mathcal{X}}
\newcommand{\cY}{\mathcal{Y}}
\newcommand{\bR}{\mathbb{R}}
\newcommand{\dV}[1][u]{\cU\left(\Omega;\bR^{d_v}\right)}
\newcommand{\dVt}[1][u]{\cU\left(\left[0,\infty\right)\times\Omega;\bR^{d_v}\right)}
\newcommand{\bx}{\mathbf{x}}
\newcommand{\by}{\mathbf{y}}
\newcommand{\bn}{\mathbf{n}}
\title{$p$-Poisson surface reconstruction\\ in curl-free flow from point clouds}
\author{%
  Yesom Park\textsuperscript{1}\thanks{Equal contribution authors. Correspondence to: \tt <mkang@snu.ac.kr>.}\ \  ,\  Taekyung Lee\textsuperscript{2}\footnotemark[1]\ \  ,\  Jooyoung Hahn\textsuperscript{3},\  Myungjoo Kang\textsuperscript{1}\\
 \textsuperscript{1} Department of Mathematical Sciences, Seoul National University \\
 \textsuperscript{2} Interdisciplinary Program
in Artificial Intelligence, 
Seoul National University \\
\textsuperscript{3} Department of Mathematics and Descriptive Geometry,\\Slovak University of Technology in Bratislava\\
  \texttt{\{yeisom, dlxorud1231, mkang\}@snu.ac.kr}\\
  \texttt{jooyoung.hahn@stuba.sk} 
}
\begin{document}

\maketitle

\begin{abstract}
The aim of this paper is the reconstruction of a smooth surface from an unorganized point cloud sampled by a closed surface, with the preservation of geometric shapes, without any further information other than the point cloud. Implicit neural representations (INRs) have recently emerged as a promising approach to surface reconstruction. However, the reconstruction quality of existing methods relies on ground truth implicit function values or surface normal vectors. In this paper, we show that proper supervision of partial differential equations and fundamental properties of differential vector fields are sufficient to robustly reconstruct high-quality surfaces. We cast the $p$-Poisson equation to learn a signed distance function (SDF) and the reconstructed surface is implicitly represented by the zero-level set of the SDF. For efficient training, we develop a variable splitting structure by introducing a gradient of the SDF as an auxiliary variable and impose the $p$-Poisson equation directly on the auxiliary variable as a hard constraint. Based on the curl-free property of the gradient field, we impose a curl-free constraint on the auxiliary variable, which leads to a more faithful reconstruction. Experiments on standard benchmark datasets show that the proposed INR provides a superior and robust reconstruction. The code is available at \url{https://github.com/Yebbi/PINC}.

\end{abstract}

\section{Introduction}
\vspace{-12pt}
\begin{wrapfigure}{r}{0.49\textwidth}
  \vspace{-18pt}
  \begin{center}
    \includegraphics[width=0.49\textwidth]{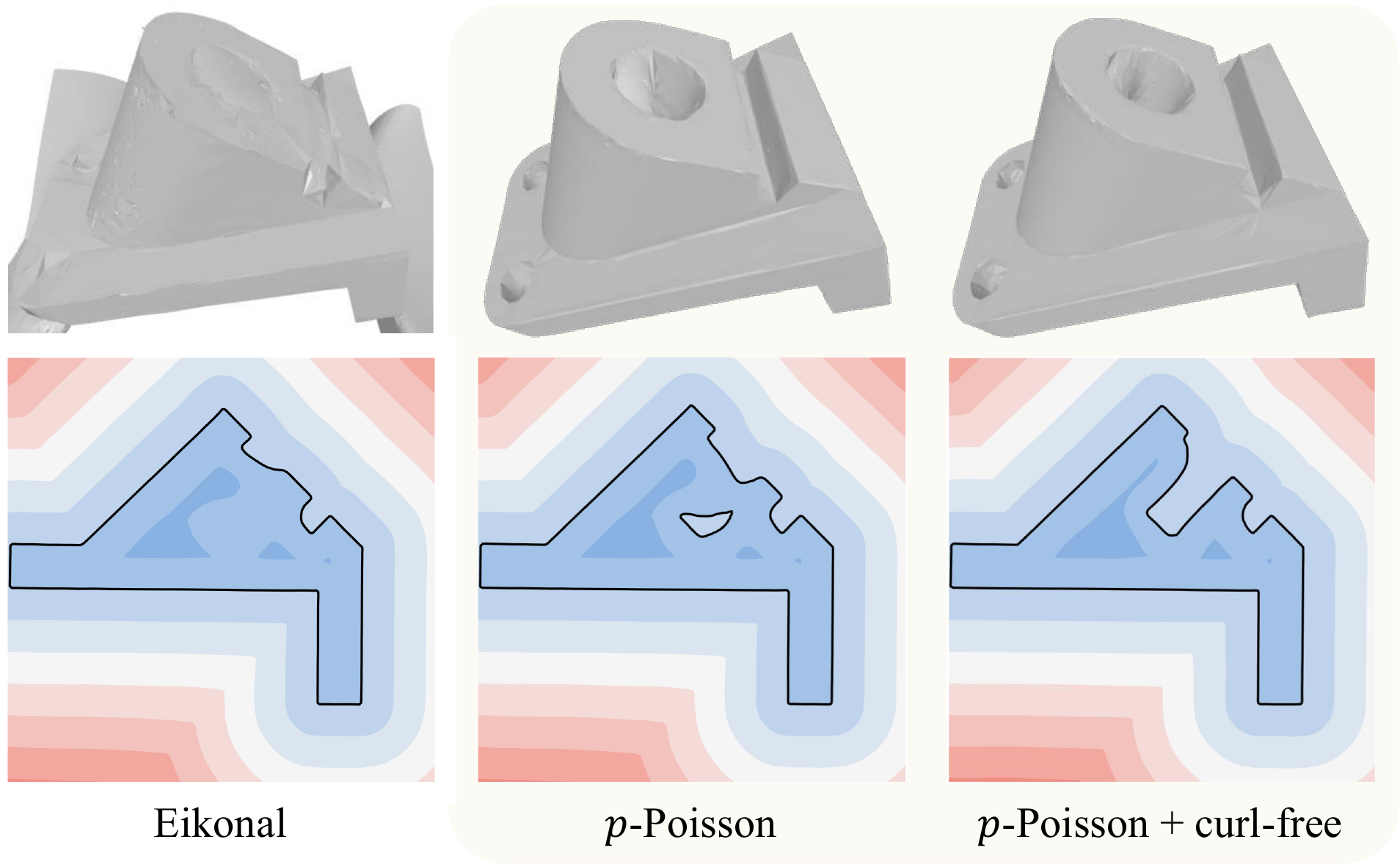}
  \end{center}
  \vspace{-10pt}
  \caption{Comparison of reconstruction using an eikonal equation \eqref{eq:eik_naive}, the $p$-Poisson equation \eqref{eq:loss_wo_curl}, and the proposed $p$-Poisson equation with the curl-free condition \eqref{eq:loss_PINC}.}
  \vspace{-15pt}
  \label{fig:main}
\end{wrapfigure}

Surface reconstruction from an unorganized point cloud has been extensively studied for more than two decades \cite{bernardini1999ball,huang2009consolidation,marton2009fast,berger2014state,yariv2020multiview} due to its many downstream applications in computer vision and computer graphics\cite{berger2017survey,chen2019learning,yariv2020multiview,wang2019normalized}. Classical point cloud or mesh-based representations are efficient but they do not guarantee a watertight surface and are usually limited to fixed geometries. Implicit function-based representations of the surface \cite{hoppe1992surface,zhao2001fast,moenning2003fast,calakli2011ssd} as a level set $\cS=\left\{\bx\in\bR^3\mid u\left(\bx\right)=c\right\}$ of a continuous implicit function $u:\bR^3\rightarrow\bR$, such as signed distance functions (SDFs) or occupancy functions, have received considerable attention for providing watertight results and great flexibility in representing different topologies. In recent years, with the rise of deep learning, a stream of work called \textit{implicit neural representations} (INRs) \cite{atzmon2019controlling,park2019deepsdf,chen2019learning,xu2019disn,erler2020points2surf,sitzmann2020implicit,liu2020dist,ful2022geoneus} has revisited them by parameterizing the implicit function $u$ with neural networks. INRs have shown promising results by offering efficient training and expressive surface reconstruction. 

Early INRs \cite{park2019deepsdf, michalkiewicz2019deep, chen2019learning} treat the points-to-surface problem as a supervised regression problem with ground-truth distance values, which are difficult to use in many situations. To overcome this limitation, some research efforts have used partial differential equations (PDEs), typically the eikonal equation, as a means to relax the 3D supervision \cite{gropp2020implicit,lipman2021phase,peng2021shape}.
While these efforts have been successful in reconstructing various geometries, they encounter an issue of non-unique solutions in the eikonal equation and rely heavily on the oriented normal vector at each point. They often fail to capture fine details or reconstruct plausible surfaces without normal vectors. A raw point cloud usually lacks normal vectors or numerically estimated normal vectors \cite{amenta1998surface,dey2005normal} contain approximation errors. Moreover, the prior works are vulnerable to noisy observations and outliers.

The goal of this work is to propose an implicit representation of surfaces that not only provides smooth reconstruction but also recovers high-frequency features only from a raw point cloud. To this end, we provide a novel approach that expresses an approximated SDF as the unique solution to the
$p$-Poisson equation. In contrast to previous studies that only describe the SDF as a network, we define the gradient of the SDF as an auxiliary variable, motivated by variable splitting methods \cite{peaceman1955numerical, wang2008new, goldstein2009split, boyd2011distributed} in the optimization literature. We then parameterize the auxiliary output to automatically satisfy the $p$-Poisson equation by reformulating the equation in a divergence-free form. The divergence-free splitting representation contributes to efficient training by avoiding deeply nested gradient chains and allows the use of sufficiently large $p$, which permits an accurate approximation of the SDF. In addition, we impose a curl-free constraint \cite{hahn2013stroke} because the auxiliary variable should be learned as a conservative vector field which has vanishing curl. The curl-free constraint serves to achieve a faithful reconstruction. We carefully evaluate the proposed model on widely used benchmarks and robustness to noise. The results demonstrate the superiority of our model without a priori knowledge of the surface normal at the data points.
\section{ Background and related works}

\paragraph{Implicit neural representations}
In recent years, implicit neural representations (INRs) \cite{mescheder2019occupancy, chen2019learning,atzmon2020sal,sitzmann2020implicit,vincent2020metasdf}, which define a surface as zero
level-sets of neural networks, have been extensively studied.
Early work requires the ground-truth signed implicit function \cite{park2019deepsdf, chen2019learning, mescheder2019occupancy}, which is difficult to obtain in real-world scenarios.
Considerable research \cite{atzmon2020sal, atzmon2020sald} is devoted to removing 3D supervision and relaxing it with a ground truth normal vector at each point. In particular, several efforts use PDEs to remove supervision and learn implicit functions only from raw point clouds. Recently, IGR \cite{gropp2020implicit} revisits a conventional numerical approach \cite{calakli2011ssd} that accesses the SDF by incorporating the eikonal equation into a variational problem by using modern computational tools of deep learning.
Without the normal vector, however, IGR misses fine details. To alleviate this problem, FFN \cite{tancik2020fourier} and SIREN \cite{sitzmann2020implicit} put the high frequencies directly into the network. Other approaches exploit additional loss terms to regulate the divergence \cite{ben2022digs} or the Hessian \cite{zhang2022critical}.
The vanishing viscosity method, which perturbs the eikonal equation with a small diffusion term, is also considered \cite{lipman2021phase,pumarola2022visco} to mitigate the drawback that the eikonal loss has unreliable minima. The classical Poisson reconstruction \cite{kazhdan2006poisson}, which recovers the implicit function by integration over the normal vector field, has also been revisited to accelerate the model inference time \cite{peng2021shape}, but supervision of the normal vector field is required. Neural-Pull \cite{ma2020neural} constructs a new loss function by borrowing the geometrical property that the SDF and its gradient define the shortest path to the surface.

\paragraph{$p$-Poisson equation}
The SDF is described by a solution of various PDEs. The existing work \cite{gropp2020implicit, sitzmann2020implicit, ben2022digs} uses the eikonal equation, whose viscosity solution describes the SDF.
However, the use of the residual of the eikonal equation as a loss function raises concerns about the convergence to the SDF due to non-unique solutions of the eikonal equation.
Recent works \cite{sitzmann2020implicit, ben2022digs} utilize the notion of vanishing viscosity to circumvent the issue of non-unique solutions. 
In this paper, we use the $p$-Poisson equation to approximate the SDF, which is a nonlinear generalization of the Poisson equation ($p=2$):
\begin{equation} \label{eq:p_poisson}
    \begin{cases}
        -\triangle _p u=-\nabla\cdot\left(\left\Vert \nabla u\right\Vert ^{p-2}\nabla u\right) = 1\ \text{in } \Omega\\
        u = 0\ \text{on } \Gamma,
    \end{cases}
\end{equation}
where $p\geq2$, the computation domain $\Omega\subset\bR^3$ is bounded, and $\Gamma$ is embedded in $\Omega$.

The main advantage of using the $p$-Poisson equation is that the solution to \eqref{eq:p_poisson} is unique in Sobolev space $W^{1,p}\left(\Omega\right)$ \cite{lindqvist2017notes}. The unique solution with $p \geq 2$ brings a viscosity solution of the eikonal equation in the limit $p \rightarrow \infty$, which is the SDF, and it eventually prevents finding non-viscosity solutions of the eikonal equation; see a further discussion with an example in Appendix \ref{appen:uniqueness}.
Moreover, in contrast to the eikonal equation, it is possible to describe a solution of \eqref{eq:p_poisson} as a variational problem and compute an accurate approximation \cite{belyaev2015variational,fayolle2021signed}:
\begin{equation}
\underset{u}{\min}\int_\Omega \frac{\left\Vert\nabla u\right\Vert^p}{p} d\bx - \int_\Omega u d\bx.
\end{equation}
As $p\rightarrow\infty$, it has been shown \cite{manfredilimits, kawohl1990familiy} that the solution $u$ of \eqref{eq:p_poisson} converges to the SDF whose zero level set is $\Gamma$. As a result, increasing $p$ gives a better approximation of the SDF, which is definitely helpful for surface reconstruction. However, it is still difficult to use a fairly large $p$ in numerical computations and in this paper we will explain one of the possible solutions to the mentioned problem.
\section{Method} \label{sec:method}
In this section, we propose a $p$-\textbf{P}oisson equation based \textbf{I}mplicit \textbf{N}eural representation with \textbf{C}url-free constraint (\textbf{PINC}). From an unorganized point cloud $\cX = \left\{ \bx_i:i=1, 2, \ldots, N\right\}$ sampled by a closed surface $\Gamma$, a SDF $u:\bR^3\rightarrow\bR$ whose zero level set is the surface $\Gamma = \left\{\bx\in\bR^3\mid u\left(\bx\right)=0\right\}$ is reconstructed by the proposed INR.
There are two key elements in the proposed method: First, using a variable-splitting representation \cite{park2023resdf} of the network, an auxiliary output is used to learn the gradient of the SDF that satisfies the $p$-Poisson equation \eqref{eq:p_poisson}. Second, a curl-free constraint is enforced on an auxiliary variable to ensure that the differentiable vector identity is satisfied.

\subsection{$p$-Poisson equation}\label{sec:pPoisson}
A loss function in the physics-informed framework \cite{raissi2019physics} of the existing INRs for the $p$-Poisson equation \eqref{eq:p_poisson} can be directly written:
\begin{equation}\label{eq:energy_pPoisson}
\underset{u}{\min}\int_\Gamma \left\vert u \right\vert d\bx + \lambda_0 \int_\Omega \left\vert \nabla\cdot\left(\left\Vert \nabla u\right\Vert ^{p-2}\nabla u\right) + 1\right\vert d\bx,
\end{equation}
where $\lambda_0 > 0$ is a regularization constant. To reduce the learning complexity of the second integrand, we propose an augmented network structure that separately parameterizes the gradient of the SDF as an auxiliary variable that satisfies the $p$-Poisson equation \eqref{eq:p_poisson}.
\paragraph{Variable-splitting strategy}
Unlike existing studies \cite{gropp2020implicit,lipman2021phase,ben2022digs} that use neural networks with only one output $u$ for the SDF, we introduce a separate auxiliary network output $G$ for the gradient of the SDF; see that the same principle is used in \cite{park2023resdf}. In the optimization literature, it is called the variable splitting method \cite{peaceman1955numerical, wang2008new, goldstein2009split, boyd2011distributed} and it has the advantage of decomposing a complex minimization into a sequence of relatively simple sub-problems.
With the auxiliary variable $G = \nabla u$ and the penalty method \cite{boyd2004convex}, the variational problem \eqref{eq:energy_pPoisson} is converted into an unconstrained problem:
\begin{equation}\label{eq:split_penalty} 
\underset{u,G}{\min}\int_\Gamma \left\vert u \right\vert d\bx +  \lambda_0 \int_\Omega \left\vert \nabla\cdot\left(\left\Vert G\right\Vert ^{p-2}G\right) + 1\right\vert d\bx + \lambda_1 \int_\Omega \left\Vert \nabla u-G\right\Vert ^2 d\bx,
\end{equation}
where $\lambda_1>0$ is a penalty parameter representing the relative importance of the loss terms.

\paragraph{$p$-Poisson as a hard constraint}
Looking more closely at the minimization \eqref{eq:split_penalty}, if $G$ is already a gradient to satisfy \eqref{eq:p_poisson}, then the second term in \eqref{eq:split_penalty} is no longer needed and it brings the simplicity of one less parameter. Now, for a function $F:\Omega \rightarrow \bR^3$ such that $\nabla \cdot F=1$, for example $F\left(\mathbf{x}\right)=\frac{1}{3}\mathbf{x}$, the $p$-Poisson equation \eqref{eq:p_poisson} is reformulated by the divergence-free form:
\begin{equation}\label{eq:div_free_form}
    \nabla\cdot\left(\left\Vert \nabla u\right\Vert ^{p-2}\nabla u + F\right) = 0.
\end{equation}
Then, there exists a vector potential $\Psi:\bR^3\rightarrow\bR^3$ satisfying
\begin{equation}\label{eq:vec_pot}
\left\Vert G\right\Vert ^{p-2}G + F = \nabla\times \Psi,
\end{equation}
where $G = \nabla u$. Note that a similar idea is used in the neural conservation law \cite{richter2022neural} to construct a divergence-free vector field built on the Helmholtz decomposition \cite{koenigsberger1906hermann, van1958helmholtz}. From the condition \eqref{eq:vec_pot}, we have $\left\Vert G\right\Vert^{p-1} = \left\Vert\nabla\times \Psi - F\right\Vert$ and $G$ is parallel to $\nabla\times \Psi - F$, then the auxiliary output $G$ is explicitly written:
\begin{equation}\label{eq:GTheta}
G=\frac{\nabla\times \Psi - F}{\left\Vert \nabla\times \Psi - F \right\Vert^{\frac{p-2}{p-1}}}.    
\end{equation}
This confirms that the minimization problem \eqref{eq:split_penalty} does not require finding $G$ directly, but rather that it can be obtained from the vector potential $\Psi$. Therefore, the second loss term in \eqref{eq:split_penalty} can be eliminated by approximating the potential function $\Psi$ by a neural network and defining the auxiliary output $G$ as a hard constraint \eqref{eq:GTheta}. To sum up, we use a loss function of the form
\begin{equation}\label{eq:loss_wo_curl}
    \cL_{p\text{-Poisson}}= \int_\Gamma \left\vert u \right\vert d\bx + \lambda_1\int_\Omega \left\Vert \nabla u - G\right\Vert^2 d\bx,
\end{equation}
where $G$ is obtained by \eqref{eq:GTheta}, the first term is responsible for imposing the boundary condition of \eqref{eq:p_poisson}, and the second term enforces the constraint $G=\nabla u$ between primary and auxiliary outputs. It is worth mentioning that $G$ in \eqref{eq:GTheta} is designed to exactly satisfy the $p$-Poisson equation \eqref{eq:p_poisson}.

An advantage of the proposed loss function \eqref{eq:loss_wo_curl} and the hard constraint \eqref{eq:GTheta} is that \eqref{eq:p_poisson} can be solved for sufficiently large~$p$, which is critical to make a better approximation of the SDF. It is not straightforward in \eqref{eq:energy_pPoisson} or \eqref{eq:split_penalty} because the numeric value of $(p-2)$-power with a large $p$ easily exceeds the limit of floating precision. On the other hand, in \eqref{eq:GTheta} we use $(p-2)/(p-1)$-power, which allows stable computation even when $p$ becomes arbitrarily large. The surface reconstruction with varying $p$ in Figure \ref{fig:ablation_p} shows that using a large enough $p$ is crucial to get a good reconstruction. As the $p$ increases, the reconstruction gets closer and closer to the point cloud.
Furthermore, it is worth noting that the proposed representation expresses the second-order PDE \eqref{eq:p_poisson} with first-order derivatives only. By reducing the order of the derivatives, the computational graph is simplified than \eqref{eq:energy_pPoisson} or \eqref{eq:split_penalty}.

Note that one can think of an approach to directly solve the eikonal equation $\left\Vert \nabla u \right\Vert = 1$ with an auxiliary variable $H = \nabla u$ as an output of the neural network:
\begin{equation}\label{eq:eik_naive} 
\min_{u,\left\Vert H\right\Vert = 1}  \int_\Gamma \left\vert u \right\vert d\bx +  \eta \int_\Omega \left\Vert \nabla u-H\right\Vert ^2 d\bx,
\end{equation}
where $\eta > 0$. The above loss function may produce a non-unique solution of the eikonal equation, which causes numerical instability and undesirable estimation of the surface reconstruction; see Figure~\ref{fig:main}. To alleviate such an issue, the vanishing viscosity method is used in \cite{lipman2021phase,pumarola2022visco} to approximate the SDF by $u_\sigma$ as $\sigma \rightarrow 0$, a solution of $ - \sigma\triangle u_\sigma + \text{sign}\left(u_\sigma\right)\left( \left\vert\nabla u_\sigma \right\vert - 1 \right) = 0$. However, the results are dependent on the hyper-parameter $\sigma>0$ related to the resolution of the discretized computational domain and the order of the numerical scheme \cite{Churbanov2019Num, Hahn2023Lap}.

\subsection{Curl-free constraint}
In the penalty method, we have to compute more strictly to ensure that $G = \nabla u$ by using progressively larger values of $\lambda_1$ in \eqref{eq:loss_wo_curl}, but in practice we cannot make the value of $\lambda_1$ infinitely large. Now, we can think of yet another condition for enforcing the constraint $G = \nabla u$ from a differential vector identity which says a conservative vector field is curl-free:
\begin{equation}
    \nabla\times G = 0 \iff G = \nabla u
\end{equation}
for some scalar potential function $u$.
While it may seem straightforward, adding a penalty term $\int_\Omega \left\Vert \nabla\times G\right\Vert^2 d\bx$ at the top of \eqref{eq:loss_wo_curl} is fraught with problems. Since $G$ is calculated by using a curl operation \eqref{eq:GTheta}, the mentioned penalty term makes a long and complex computational graph. In addition, it has been reported that such loss functions, which include high-order derivatives computed by automatic differentiation, induce a loss landscape that is difficult to optimize \cite{krishnapriyan2021characterizing,wang2021understanding}. In order to relax the mentioned issue, we augment another auxiliary variable $\tilde{G}$, where $G=\tilde{G}$ and $\nabla\times \tilde{G}=0$ are constrained.

By incorporating the new auxiliary variable $\tilde{G}$ and its curl-free constraint, we have the following loss function:
\begin{equation}\label{eq:loss_PINC}
    \cL_{\text{PINC}} = \cL_{p\text{-Poisson}} + \lambda_2 \int_\Omega \left\Vert G - \tilde{G}\right\Vert^2 d\bx + \lambda_3 \int_\Omega \left\Vert \nabla\times\tilde{G}\right\Vert^2 d\bx.
\end{equation}
Note that the optimal $\tilde{G}$ should have a unit norm according to the eikonal equation. 
To facilitate training, we relax this nonconvex equality condition into a convex constraint 
$\parallel\tilde{G}\parallel\leq 1$.
To this end, we parameterize the second network auxiliary output $\tilde{\Psi}$ and define $\tilde{G}$ by
\begin{equation}
\tilde{G}=\mathcal{P}\left(\tilde{\Psi}\right)\coloneqq\frac{\tilde{\Psi}}{\max\left\{1,\parallel\tilde{\Psi}\parallel
\right\}},
\end{equation}
where $\mathcal{P}$ is the projection operator to the three-dimensional unit ball.
Appendix \ref{appen:curl-free} provides further discussion on the importance of the curl-free term to learn a conservative vector field.

\begin{figure}[t]
  \begin{center}
    \includegraphics[width=0.80\textwidth]{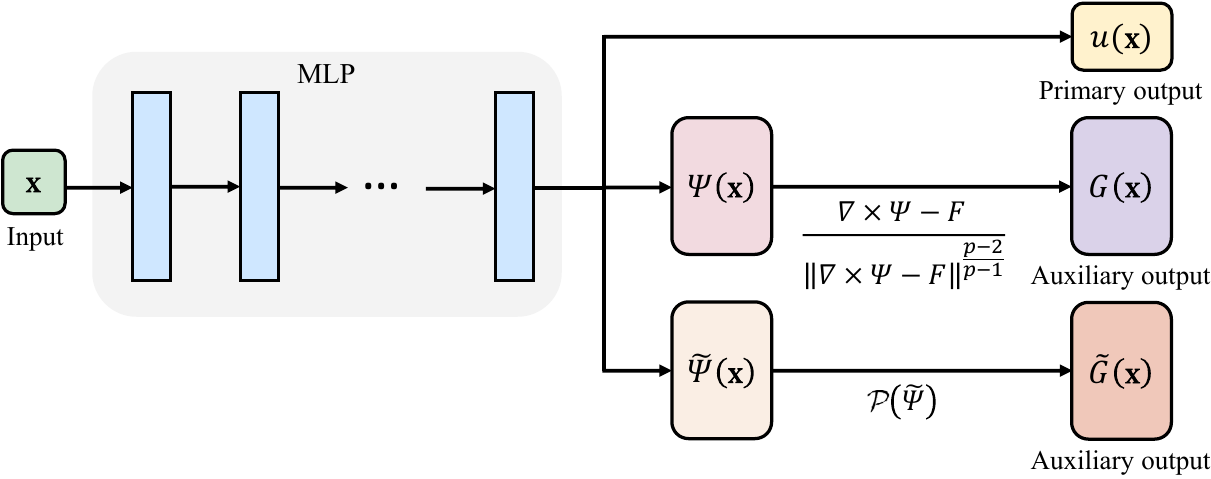}
  \end{center}
  \caption{The visualization of the augmented network structure with two auxiliary variables.}
  \label{fig:network}
\end{figure}

Figure \ref{fig:network} illustrates the proposed network architecture. The primary and the auxiliary variables are trained in a single network, instead of being trained separately in individual networks. The number of network parameters remains almost the same since only the output dimension of the last layer is increased by six, while all hidden layers are shared.

\subsection{Proposed loss function} \label{sec:loss}

In the case of a real point cloud to estimate a closed surface by range scanners, it is inevitable to have occluded parts of the surface where the surface has a concave part depending on possible angles of the measurement \cite{Levoy2000Dig}. It ends up having relatively large holes in the measured point cloud. Since there are no points in the middle of the hole, it is necessary to have a certain criterion for how to fill in the hole. In order to focus to check the quality of $\cL_{\text{PINC}}$ \eqref{eq:loss_PINC} in this paper, we choose a simple rule to minimize the area of zero level set of $u$:
\begin{equation}\label{eq:loss_final}
    \cL_\text{total}= \cL_{\text{PINC}}+ \lambda_4 \int_\Omega \delta_\epsilon\left(u\right)\left\Vert\nabla u\right\Vert d\bx,
\end{equation}
where $\lambda_4 > 0$ and $\delta_\epsilon(x) = 1-\tanh^2\left(\frac{x}{\epsilon}\right)$ is a smeared Dirac delta function with $\epsilon >0$. The minimization of the area is used in \cite{fuchs1977optimal, pumarola2022visco} and the advanced models \cite{Caselles20008Geom, he2020curvature, zhang2022critical} on missing parts of the point cloud to provide better performance of the reconstruction.

\section{Experimental results} \label{sec:experiments}
In this section, we evaluate the performance of the proposed model to reconstruct 3D surfaces from point clouds.
We study the following questions: \textbf{(i)} How does the proposed model perform compared to existing INRs? \textbf{(ii)} Is it stable from noise? \textbf{(iii)} What is the role of the parts that make up the model and the loss?
Each is elaborated in order in the following sections.
\begin{figure}[t]
  \begin{center}
       \includegraphics[width=0.99\textwidth]{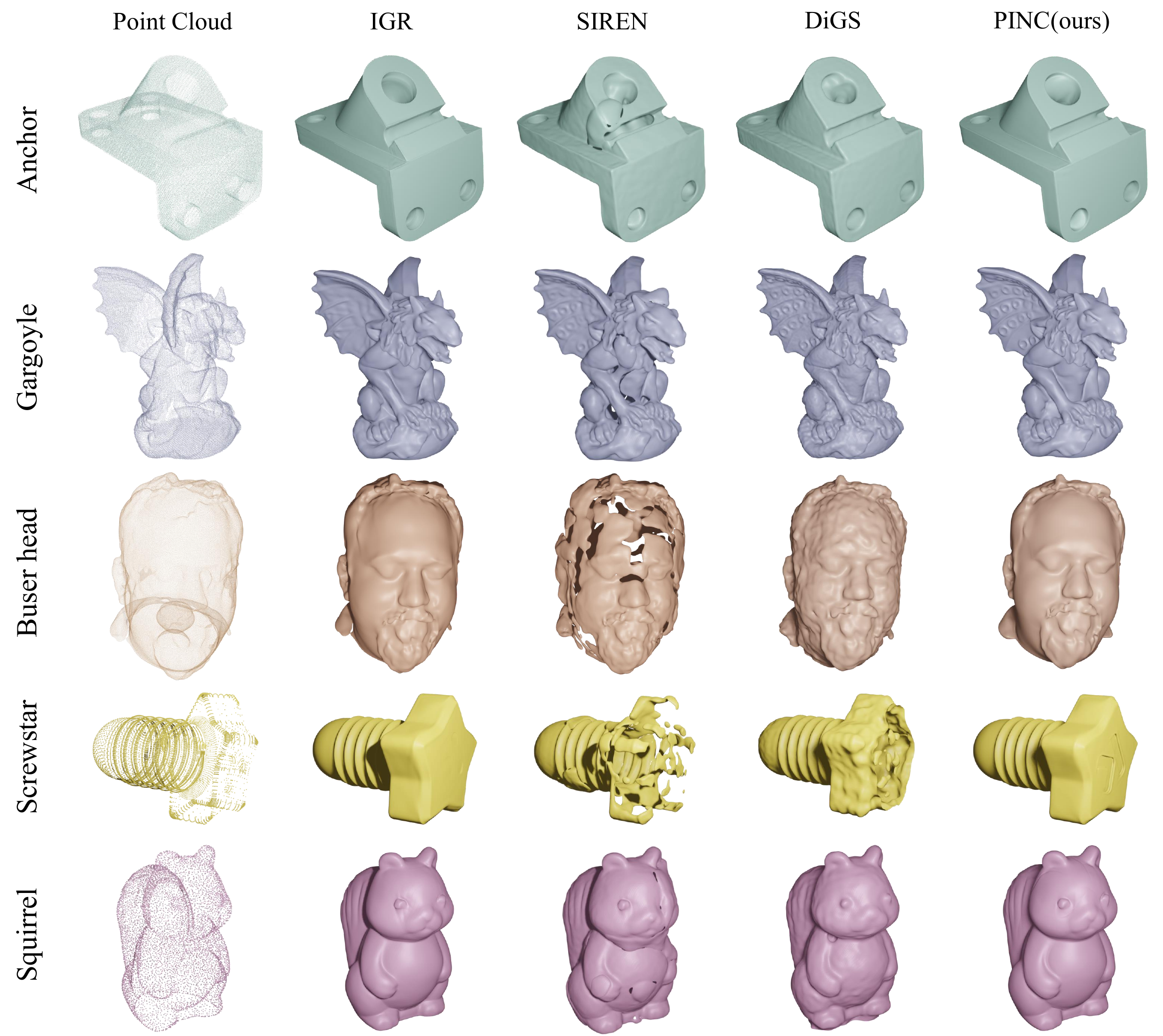}
  \end{center}
  \caption{3D Reconstruction results for SRB and Thingi10K datasets.}
  \label{fig:reconstruction}
\end{figure}
\paragraph{Implementation}
As in previous studies \cite{park2019deepsdf, gropp2020implicit, lipman2021phase}, we use an 8-layer network with 512 neurons and a skip connection to the middle layer, but only the output dimension of the last layer is increased by six due to the auxiliary variables.
For \eqref{eq:loss_final}, we empirically set the loss coefficients to $\lambda_1=0.1$, $\lambda_2=0.0001$. $\lambda_3=0.0005$, and $\lambda_4=0.1$ and use $p=\infty$ in \eqref{eq:GTheta} for numerical simplicity. We implement all numerical experiments on a single NVIDIA RTX 3090 GPU.
In all experiments, we use the Adam optimizer \cite{kingma2014adam} with learning rate $10^{-3}$ decayed by $0.99$ every $2000$ iterations.

\paragraph{Datasets} We leverage two widely used benchmark datasets to evaluate the proposed model for surface reconstruction: Surface Reconstruction Benchmark (SRB) \cite{berger2013benchmark} and Thingi10K \cite{zhou2016thingi10k}. 
The geometries in the mentioned datasets are challenging because of their complex topologies and incomplete observations.
Following the prior works, we adopt five objects per dataset.
We normalize the input data to center at zero and have a maximum norm of one.

\paragraph{Baselines} We compare the proposed model with the following baselines:
IGR \cite{gropp2020implicit}, SIREN \cite{sitzmann2020implicit}, 
SAL \cite{atzmon2020sal},
PHASE \cite{lipman2021phase},
and DiGS \cite{ben2022digs}.
All models are evaluated from only raw point cloud data without surface normal vectors. 
A comparison with models that leverage surface normals as supervision is included in Appendix \ref{appen:results}.
\paragraph{Metrics}
To estimate the quantitative accuracy of the reconstructed surface, we measure Chamfer $(d_C)$ and Hausdorff $(d_H)$ distances between the ground-truth point clouds and the reconstructed surfaces.
Moreover, we report one-sided distances $d_{\overrightarrow{C}}$ and $d_{\overrightarrow{H}}$ between the noisy data and the reconstructed surfaces.
Please see Appendix \ref{appen:metrics} for precise definitions.

\subsection{Surface reconstruction}\label{sec:surface_reconstruction}
We validate the performance of the proposed PINC \eqref{eq:loss_final} in surface reconstruction in comparison to other INR baselines.
For a fair comparison, we consider the baseline models that were trained without a normal prior.
Table \ref{tab:SRB} summarizes the numerical comparison on SRB in terms of metrics.
We report the results of baselines from \cite{lipman2021phase,pumarola2022visco,ben2022digs}.
The results show that the reconstruction quality obtained is on par with the leading INRs, and we achieved state-or-the-art performance for Chamfer distances.

We further verify the accuracy of the reconstructed surface for the Thingi10K dataset by measuring the metrics.
For Thingi10K, we reproduce the results of IGR, SIREN, and DiGS without normal vectors using the official codes.
Results on Thingi10K presented in Table \ref{tab:Thingi10K} show the proposed method achieves superior performance compared to existing approaches. PINC achieves similar or better metric values on all objects.

The qualitative results are presented in Figure \ref{fig:reconstruction}.
SIREN, which imposes high-frequency features to the model by using a sine periodic function as activation, restores a somewhat torn surface.
Similarly, DiGS restores rough and rogged surfaces, for example, the human face and squirrel body are not smooth and are rendered unevenly.
On the other hand, IGR provides smooth surfaces but tends to over-smooth details such as the gargoyle's wings and detail on the star-shaped bolt head of screwstar.
The results confirm that the proposed PINC \eqref{eq:loss_final} adopts both of these advantages: PINC represents a smooth and detailed surface.
More results can be found in the Appendix \ref{appen:results}.

\begin{table}[t]
    \centering
    \caption{Results on surface reconstruction of SRB.} \label{tab:SRB}
      \setlength\tabcolsep{2.9pt}
     \scalebox{0.75}{
    \begin{tabular}{lcccc|cccc|cccc|cccc|cccc}
    \toprule  
      & \multicolumn{4}{c}{Anchor} & \multicolumn{4}{c}{Daratech} & \multicolumn{4}{c}{DC} & \multicolumn{4}{c}{Gargoyle} & \multicolumn{4}{c}{Loard Quas} \\
     & \multicolumn{2}{c}{GT} &  \multicolumn{2}{c}{Scans} & \multicolumn{2}{c}{GT} &  \multicolumn{2}{c}{Scans} & \multicolumn{2}{c}{GT} &  \multicolumn{2}{c}{Scans} & \multicolumn{2}{c}{GT} &  \multicolumn{2}{c}{Scans} & \multicolumn{2}{c}{GT} &  \multicolumn{2}{c}{Scans} \\
     Model & $d_C$ & $d_H$ & $d_{\overrightarrow{C}}$ & $d_{\overrightarrow{H}}$ & $d_C$ & $d_H$ & $d_{\overrightarrow{C}}$ & $d_{\overrightarrow{H}}$ & $d_C$ & $d_H$ & $d_{\overrightarrow{C}}$ & $d_{\overrightarrow{H}}$ & $d_C$ & $d_H$ & $d_{\overrightarrow{C}}$ & $d_{\overrightarrow{H}}$ & $d_C$ & $d_H$ & $d_{\overrightarrow{C}}$ & $d_{\overrightarrow{H}}$\\
    \midrule
    IGR & 0.45 & 7.45 & 0.17 & 4.55 & 4.9 & 42.15 & 0.7 & 3.68 & 0.63 & 10.35 & 0.14 & 3.44 & 0.77 & 17.46 & 0.18 & 2.04 & 0.16 & 4.22 & 0.08 & 1.14 \\
    SIREN & 0.72 & 10.98 & 0.11 & 1.27 & 0.21 & 4.37 & 0.09 & 1.78 & 0.34 & 6.27 & 0.06 & \textbf{2.71} & 0.46 & 7.76 & 0.08 & \textbf{0.68} & 0.35 & 8.96 & 0.06 & \textbf{0.65} \\
    SAL & 0.42 & 7.21 & 0.17 & 4.67 & 0.62 & 13.21 & 0.11 & 2.15 & 0.18 & 3.06 & 0.08 & 2.82 & 0.45 & 9.74 & 0.21 & 3.84 & 0.13 & 414 & 0.07 & 4.04 \\
    PHASE & \textbf{0.29} & 7.43 & \textbf{0.09} & 1.49 & 0.35 & 7.24 & \textbf{0.08} & \textbf{1.21} & 0.19 & 4.65 & 0.05 & 2.78 & 0.17 & 4.79 & 0.07 & 1.58 & 0.11 & \textbf{0.71} & 0.05 & 0.74 \\
    DiGS & \textbf{0.29} & \textbf{7.19} & 0.11 & 1.17 & \textbf{0.20} & \textbf{3.72} & 0.09 & 1.80 & 0.15 & \textbf{1.70} & 0.07 & 2.75 & 0.17 & \textbf{4.10} & 0.09 & 0.92 & 0.12 & 0.91 & 0.06 & 0.70 \\
 \textbf{PINC} & \textbf{0.29} & 7.54 & \textbf{0.09} & 1.20 & 0.37 & 7.24 & 0.11 & 1.88 & \textbf{0.14} & 2.56 & \textbf{0.04} & 2.73 & \textbf{0.16} & 4.78 & \textbf{0.05} & 0.80 & \textbf{0.10} & 0.92 & \textbf{0.04} & 0.67\\
    \bottomrule
  \end{tabular}}
\end{table} 

\begin{table}[t]
    \centering
    \caption{Results on surface reconstruction of Thingi10K. } \label{tab:Thingi10K}
      \setlength\tabcolsep{11.0pt}
     \scalebox{0.85}{
    \begin{tabular}{lcc|cc|cc|cc|cc}
    \toprule  
      & \multicolumn{2}{c}{Squirrel} & \multicolumn{2}{c}{Buser head} & \multicolumn{2}{c}{Screwstar} & \multicolumn{2}{c}{Frogrock} & \multicolumn{2}{c}{Pumpkin} \\
     Model & $d_C$ & $d_H$ & $d_C$ & $d_H$ & $d_C$ & $d_H$ & $d_C$ & $d_H$ & $d_C$ & $d_H$ \\
    \midrule
    IGR &   0.36 &  11.97 & 0.38  &  5.95  &  0.18 & 3.02 & 0.48 & 12.05 & 0.11 & \textbf{1.13}\\
    SIREN & 0.47 & \textbf{5.66} & 0.43 & \textbf{4.81} & 0.27 & 4.98 & 0.78 & 14.75 & 0.46 & 5.03\\
DiGS & 0.50 & 12.45 & 0.39 & 10.64 & 0.26 & 6.33 & 0.45 & \textbf{10.50} & 0.32 & 8.03\\
    \textbf{PINC} & \textbf{0.35} & 11.55 & \textbf{0.37} & 6.19 & \textbf{0.17} & \textbf{3.00} & \textbf{0.43} & 11.06 & \textbf{0.10} & 1.90 \\
    \bottomrule
  \end{tabular}}
\end{table}

\subsection{Reconstruction from noisy data}
In this section, we analyze whether the proposed PINC \eqref{eq:loss_final} produces robust results to the presence of noise in the input point data.
In many situations, the samples obtained by the scanning process contain a lot of noise and inaccurate surface normals are estimated from these noisy samples. Therefore, it is an important task to perform accurate reconstruction using only noisy data without normal vectors.
To investigate the robustness to noise, we perturb the data with additive Gaussian noise with mean zero and two standard deviations 0.005 and 0.01.

We quantify the ability of the proposed model to handle noise in the input points. The qualitative results are shown in Figure \ref{fig:noise}.
Compared to existing methods, the results demonstrate superior resilience of the proposed model with respect to noise
corruption in the input samples.
We can observe that SIREN and DiGS restore broken surfaces that appear to be small grains as the noise level increases.
On the other hand, the proposed model produces a relatively smooth reconstruction.
Results show that PINC is less sensitive to noise than others.

\begin{figure}[t]
  \begin{center}
       \includegraphics[width=0.9\textwidth]{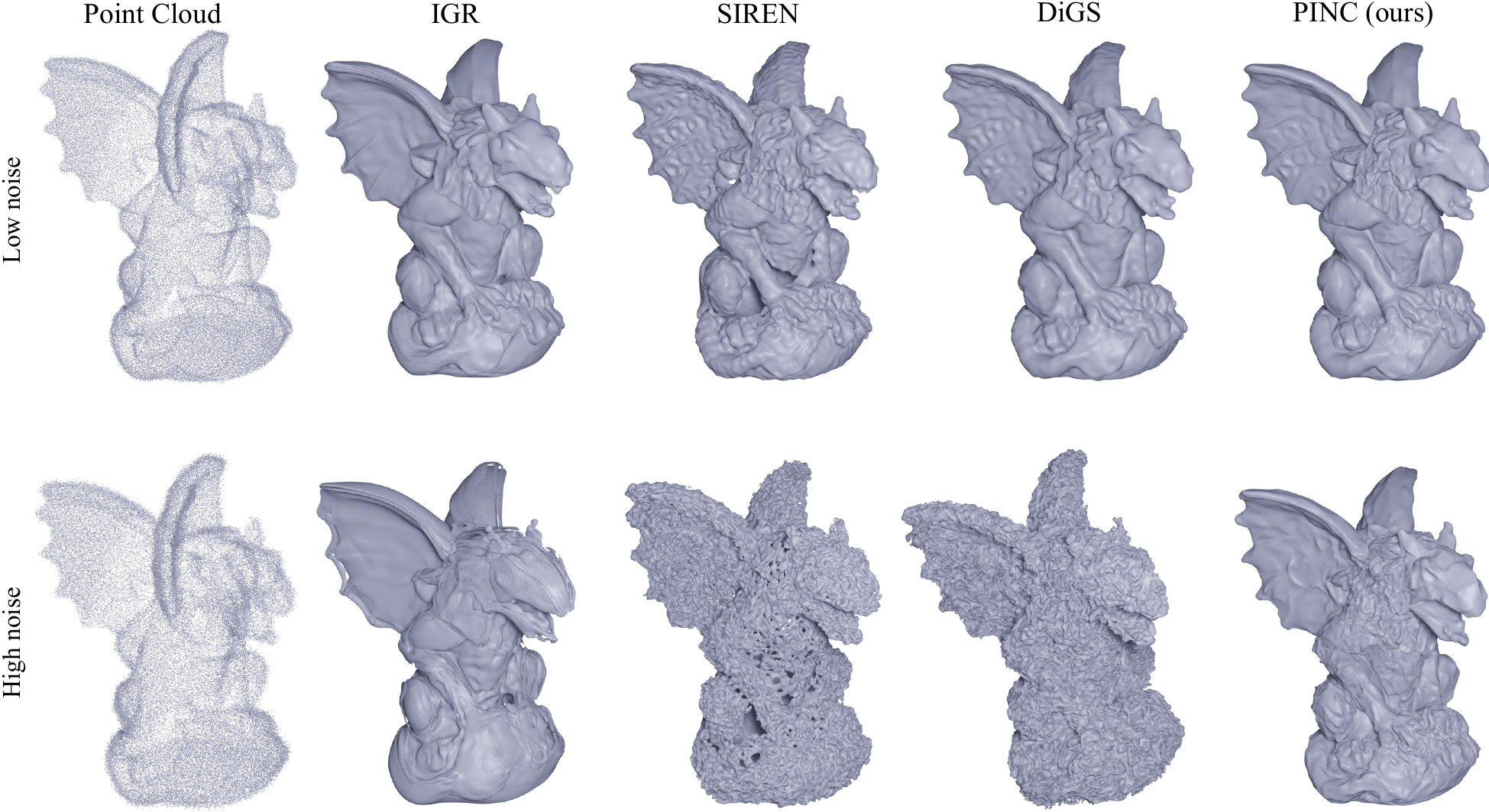}
  \end{center}
  \caption{Reconstruction results from noisy observations. Two levels of additive Gaussian noise with standard deviations $\sigma=0.005$ (low) and $0.01$ (high) are considered.}
  \label{fig:noise}
\end{figure}

\subsection{Ablation studies} \label{sec:ablation}
This section is devoted to ablation analyses which show that each part of the proposed loss function $\cL_\text{total}$ in conjunction with the divergence-free splitting architecture plays an important role in high-quality reconstruction.

\paragraph{Effect of curl-free constraint}
We first study the effect of the curl-free constraint on reconstructing high fidelity surfaces.
To investigate the effectiveness of the proposed curl-free constraint, we compare the performance of PINC without the curl-free loss term, i.e., the model trained with the loss function $\cL_{p\text{-Poisson}}$ \eqref{eq:loss_wo_curl}. 
The results on the SRB dataset are reported in Table \ref{tab:ablation_curl} and Figure \ref{fig:ablation_curl_free}.
Figure \ref{fig:ablation_curl_free} shows that the variable splitting method, which satisfies the $p$-Poisson equation as a hard constraint (without the curl-free condition), recovers a fairly decent surface, but it generates oversmoothed surfaces and details are lost.
However, as we can see from the qualitative result reconstructed with the curl-free constraint, this constraint allows us to capture the details that PINC without the curl-free condition cannot recover.
The metric values presented in Table \ref{tab:ablation_curl} also provide clear evidence of the need for the curl-free term.
To further examine the necessity of another auxiliary variable $\tilde{G}$, we conduct an additional experiment by applying the curl-free loss term directly on $G$ without the use of $\tilde{G}$. The results are presented in the second row of the Table \ref{tab:ablation_curl}. The results indicate that taking curl on $G$, which is constructed by taking curl on $\Psi$ in \eqref{eq:GTheta}, leads to a suboptimal reconstruction. This is likely due to a challenging optimization landscape that is difficult to optimize as a result of consecutive automatic differentiation \cite{wang2021understanding}. The results provide numerical evidences of the necessity of introducing $\tilde{G}$.
\begin{figure}[h]
\centering
\begin{minipage}{.47\linewidth}
    \centering
      \setlength\tabcolsep{3.0pt}
     \captionof{table}{Quantitative results on the ablation study of the curl-free term.}\label{tab:ablation_curl}
    \begin{tabular}{lcccc}
    \toprule  
     & \multicolumn{2}{c}{GT} &  \multicolumn{2}{c}{Scans}  \\
     Model & $d_C$ & $d_H$ & $d_{\overrightarrow{C}}$ & $d_{\overrightarrow{H}}$ \\
    \midrule
    wo/ curl free & 0.20 & 4.96 & 0.12 & 2.98 \\
    w/ curl free on $G$ & 4.17 & 52.26 & 0.48 & 6.03 \\
 w/ curl free on $\tilde{G}$ &  0.16 & 4.78 & 0.05 & 0.80 \\
    \bottomrule
  \end{tabular}
    \vspace{-0pt}
\end{minipage}
\hfill
\begin{minipage}{.52\linewidth}
   \vspace{-0pt}
    \begin{center}
    \includegraphics[width=0.75\textwidth]{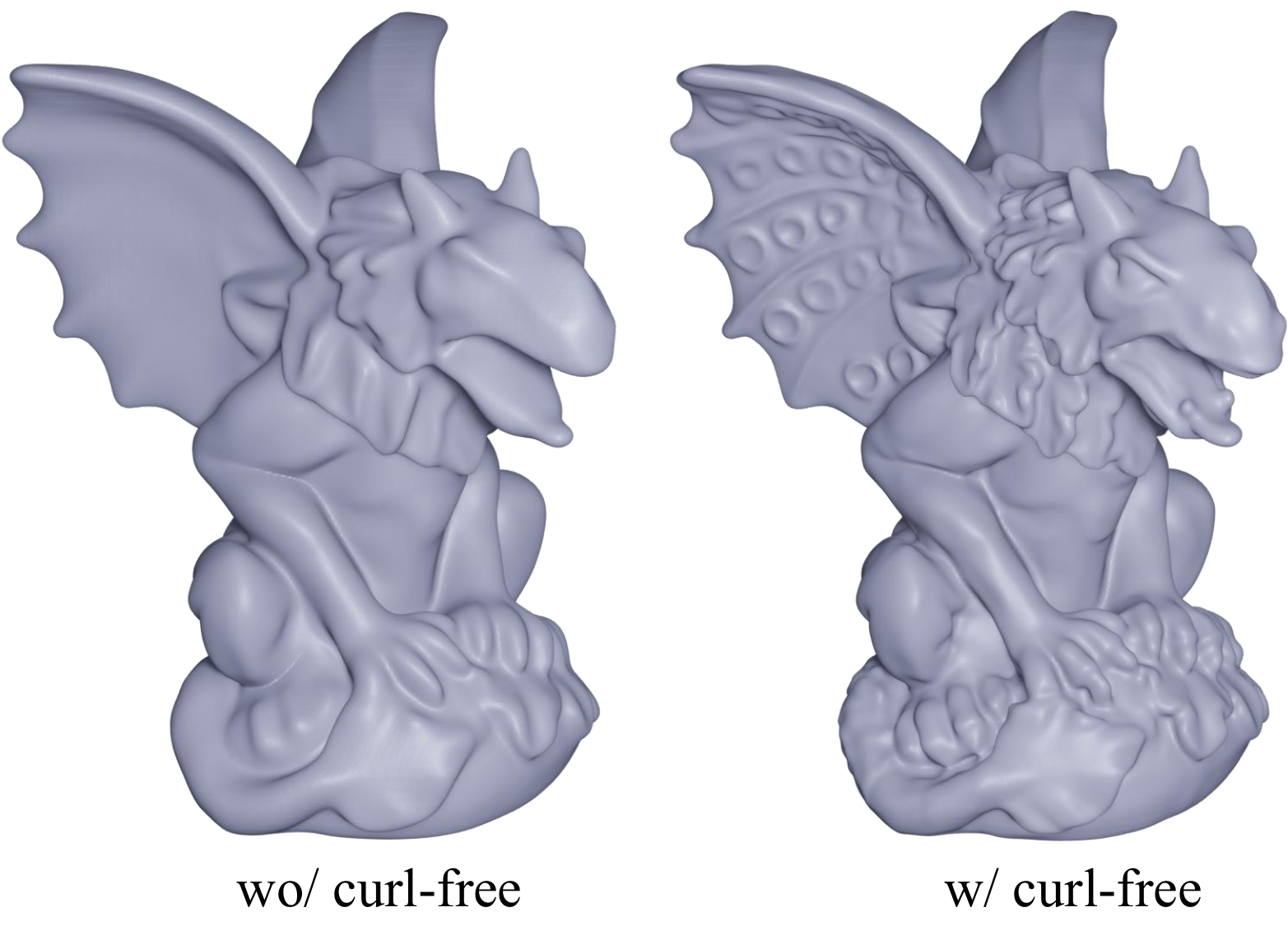}
  \end{center}
 \vspace{-7pt}
 \caption{Comparison of surface reconstruction without (left) and with (right) curl-free constraint.}
  \label{fig:ablation_curl_free}
\end{minipage}
\vspace{-10pt}
\end{figure}

\paragraph{Effect of minimal area criterion}
We study the effect of the minimal area criterion suggested in Section \ref{sec:loss}.
In real scenarios, there are defected regions where the surface has not been measured. 
To fill this part of the hole, the minimum surface area is considered.
Figure \ref{fig:ablation_area} clearly shows this effect.
Some parts in the daratech of SRB have a hole in the back. Probably because of this hole, parts that are not manifolds are spread out as manifolds as shown in the left figure without considering the minimal area.
However, we can see that adding a minimal area loss term alleviates this problem.
We would like to note that, except for daratech, we did not encounter this problem because other data are point clouds sampled from a closed surface and also are not related to hole filling. Indeed, we empirically observe that the results are quite similar with and without the minimal area term for all data other than daratech.
\begin{figure}[h]
  \begin{center}
    \includegraphics[width=0.72\textwidth]{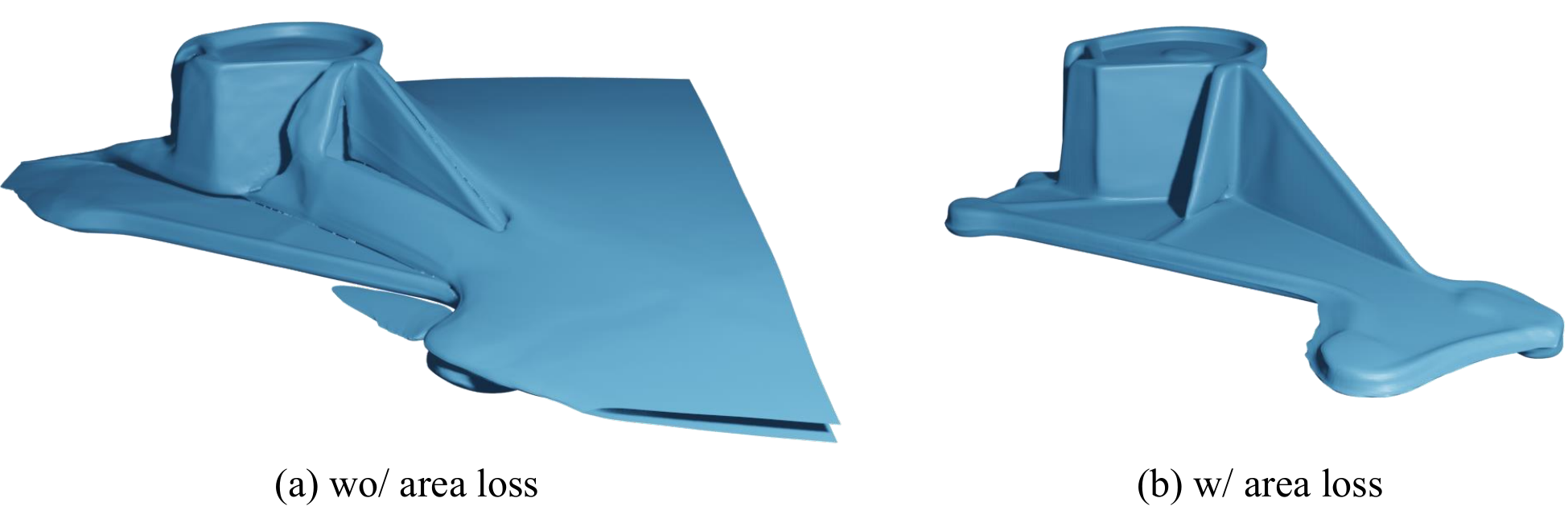}
  \end{center}
  \caption{Comparison of surface recovery without (a) and with (b) minimum area criterion.}
  \label{fig:ablation_area}
\end{figure}

\paragraph{Effect of large $p$}
The $p$-Poisson equation \eqref{eq:p_poisson} draws the SDF as $p$ becomes infinitely large. Therefore, it is natural to think that it would be good to use a large $p$. Here, we conducted experiments on the effect of $p$. 
We define $G$ with various $p=2, 10$, and $100$ and learn the SDF with it.
Figure \ref{fig:ablation_p} shows surfaces that were recovered from the Gargoyle data in the SRB with different $p$ values.
When $p$ is as small as 2, it is obvious that it is difficult to reconstruct a compact surface from points.
When $p$ is 10, a much better surface is constructed than that of $p=2$, but the by-products still remain on the small holes.
Furthermore, a large value of $p=100$ provides a quite proper reconstruction.
This experimental result demonstrates that a more accurate approximation can be obtained by the use of a large $p$, which is consistent with the theory.
This once again highlights the advantage of the variable splitting method we have proposed, which allows an arbitrarily large $p$ to be used.
This highlights the advantage of the variable splitting method \eqref{eq:GTheta} we have proposed in Section \ref{sec:pPoisson}, which allows an arbitrarily large $p$ to be used.
Note that the previous approaches have not been able to use large $p$ because the numeric value of $p$-power easily exceeds the limit of floating precision. On the other hand, the proposed method is amenable to large $p$ and hence the reconstruction becomes closer to the point cloud.
\begin{figure}[t]
	\centering{}
	\subfigure[$p=2$]{\includegraphics[height=0.3\textwidth]{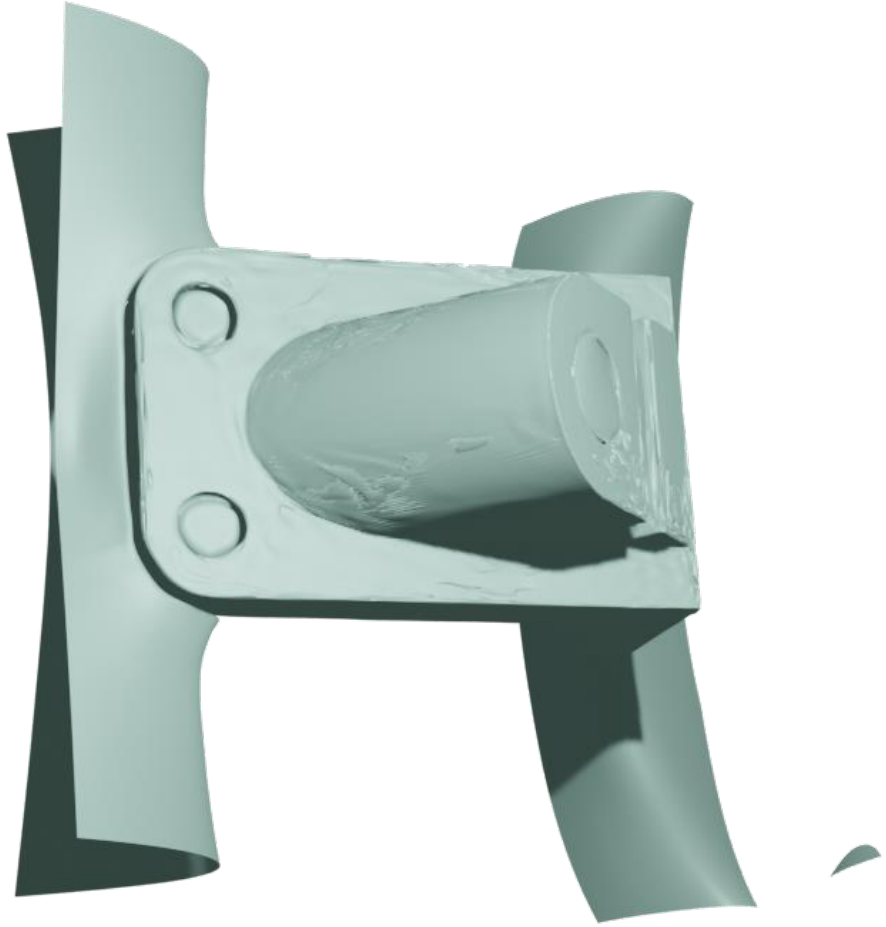}}$\quad$
	\subfigure[$p=10$]{\includegraphics[height=0.3\textwidth]{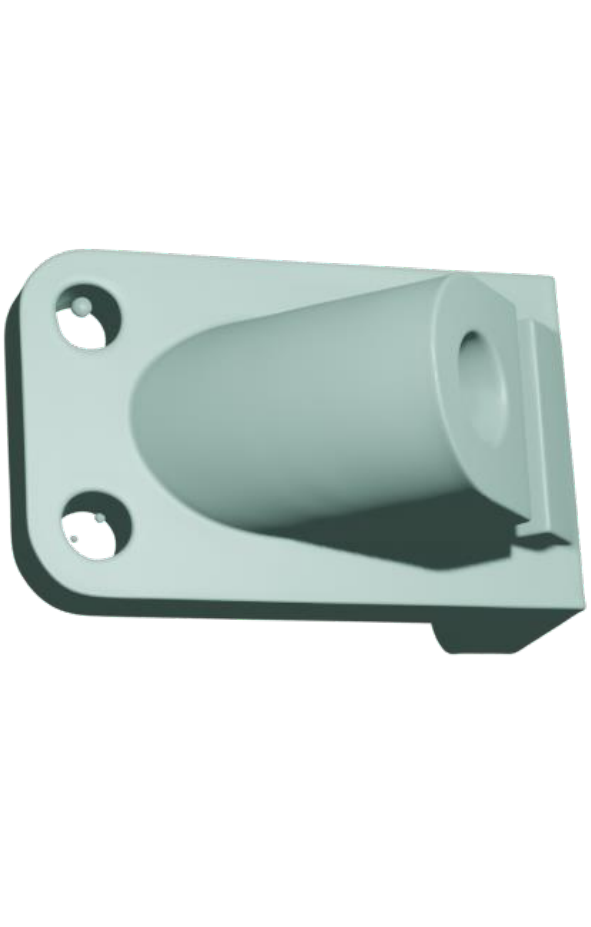}}$\quad\quad\ $
    \subfigure[$p=100$]{\includegraphics[height=0.3\textwidth]{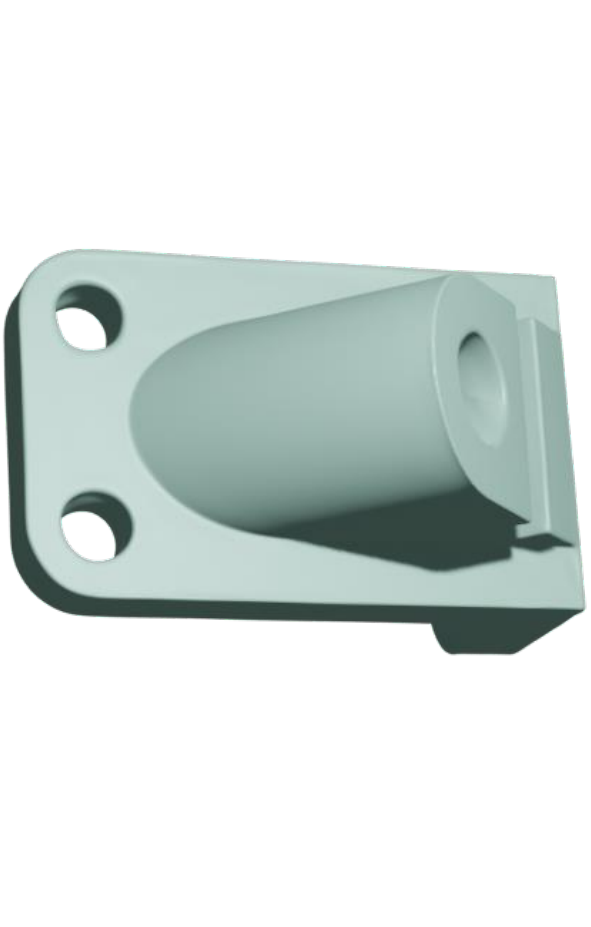}}
	\caption{Surface reconstruction of anchor data with various $p$. The results show the importance of using a sufficiently large $p$ for an accurate approximation. }
  \label{fig:ablation_p}
\end{figure}
\section{Conclusion and limitations} \label{sec:conclusion}
We presented a $p$-Poisson equation-based shape representation learning, termed PINC,  that reconstructs high-fidelity surfaces using only the locations of given points. We introduced the gradient of the SDF as an auxiliary network output and incorporated the $p$-Poisson equation into the auxiliary variable as a hard constraint. The curl-free constraint was also used to provide a more accurate representation.
Furthermore, the minimal surface area regularization was considered to provide a compact surface and overcome the ill-posedness of the surface reconstruction problem caused by unobserved points. The proposed PINC successively achieved a faithful surface with intricate details and was robust to noisy observations.

The minimization of the surface area is used to reconstruct missing parts of points under the assumption that a point cloud is measured by a closed surface. Regarding the hole-filling strategy, it still needs further discussion and investigation of various constraints such as mean curvature or total variation of the gradient. At present, the proposed PDE-based framework is limited to closed surfaces and is inadequate to reconstruct open surfaces. We leave the development to open surface reconstruction as future work. Establishing a neural network initialization that favors the auxiliary gradient of the SDF would be an interesting venue. Furthermore, the computational cost of convergence would differ when using and not using auxiliary variables. Analyzing the convergence speed or computational cost of utilizing auxiliary variables versus not utilizing them is a worthwhile direction for future research.
\section{Societal Impacts}
The proposed PINC allows high-quality representation of 3D shapes only from raw unoriented 3D point cloud. It has many potential downstream applications, including product design, security, medical imaging, robotics, and the film industry. We are aware that accurate 3D surface reconstruction can be used in malicious environments such as  unauthorized reproduction of machines without consent and digital impersonation. However, it is not a work to develop a technique to go to abuse, and we hope and encourage users of the proposed model to concenter on the positive impact of this work.

\section{Acknowledgements}
This work was supported by the NRF grant [2012R1A2C3010887] and the
MSIT/IITP ([1711117093], [2021-0-00077], [No. 2021-0-01343, Artificial Intelligence Graduate School Program(SNU)]). Also, this project has received funding from the European Union's Horizon 2020 research and innovation programme under the Marie Skłodowska-Curie grant agreement No. 945478.

\bibliographystyle{plain}
\bibliography{mybib}

\newpage
\appendix
\appendix

\section{More discussion on curl-free term} \label{appen:curl-free}
This section is devoted to both theoretically and empirically validate the necessity of the curl-free loss term.
One might think that the curl-free term is unnecessary, since a curl-free $G$ can be obtained by reducing the $L^2$ penalty term for the variable splitting constraint  $\nabla u = G$. However, this penalty term is not sufficient to train $G$ as a conservative vector field. In subsequent sections, we prove this theoretically and verify experimentally that this is indeed the case in practice.

\subsection{Theoretical justification}
In the following theorem, we opine that minimizing the $L^2$ energy of $\left\Vert \nabla u - G \right\Vert$ in \eqref{eq:loss_wo_curl} without the curl-free term is not sufficient to obtain a conservative vector field $G$. 
\begin{theorem}\label{thm:curl_free}
    There is a sequence $\left\{u_n,G_n\right\}_{n\in\mathbb{N}}$ such that $\int_\Omega \left\Vert \nabla u_n - G_n\right\Vert^2 d\mathbf{x}\rightarrow 0$ as $n\rightarrow\infty$, but $G_n$ does not converge to a curl-free vector field.
\end{theorem}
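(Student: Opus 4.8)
The plan is to exploit the familiar gap between $L^2$ control of a vector field and control of its first derivatives: one can drive a field to $0$ in $L^2$ while keeping its curl of order one (or even making it blow up). First I would take $u_n\equiv 0$ for every $n$ — so $\nabla u_n=0$, and the boundary condition $u_n=0$ on $\Gamma$ holds trivially — and set, writing $\mathbf{x}=(x_1,x_2,x_3)\in\Omega$,
\[
G_n(\mathbf{x}) \;=\; \frac{1}{n}\,\sin(n x_2)\,(1,0,0).
\]
Then $\int_\Omega\|\nabla u_n - G_n\|^2\,d\mathbf{x} = \int_\Omega\|G_n\|^2\,d\mathbf{x} = \frac{1}{n^2}\int_\Omega\sin^2(n x_2)\,d\mathbf{x}\le \frac{|\Omega|}{n^2}\to 0$, so the splitting penalty appearing in \eqref{eq:loss_wo_curl} vanishes. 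This first step is immediate thanks to the $1/n$ amplitude.

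Next I would compute the curl. Since $G_n$ has only a first component and it depends on $x_2$ alone, $\nabla\times G_n = \bigl(0,\,0,\,-\partial_{x_2}(\tfrac1n\sin(nx_2))\bigr) = \bigl(0,0,-\cos(nx_2)\bigr)$, hence $\int_\Omega\|\nabla\times G_n\|^2\,d\mathbf{x} = \int_\Omega\cos^2(nx_2)\,d\mathbf{x}$. Using $\cos^2 t=\tfrac12(1+\cos 2t)$ together with the Riemann--Lebesgue lemma (applicable since $\mathbf{1}_\Omega\in L^1(\bR^3)$, so $\int_\Omega\cos(2nx_2)\,d\mathbf{x}\to 0$), this converges to $|\Omega|/2>0$. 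Therefore $\nabla\times G_n\not\to 0$ in $L^2(\Omega)$; moreover, since every conservative field $G=\nabla u$ satisfies $\nabla\times G=0$, we get $\|\nabla\times(G_n - G)\|_{L^2}=\|\nabla\times G_n\|_{L^2}\to\sqrt{|\Omega|/2}\neq 0$ for any such $G$, so $G_n$ cannot converge to a curl-free vector field in $H(\mathrm{curl};\Omega)$ (indeed $(G_n)$ is not even Cauchy there). If a more dramatic failure is wanted, rescaling to $G_n = n^{-1/2}\sin(nx_2)(1,0,0)$ keeps the penalty $\to 0$ while making $\|\nabla\times G_n\|_{L^2}^2\sim n|\Omega|/2\to\infty$.

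Two points deserve care, and the hard part is not analytic at all — it is fixing the precise sense in which $G_n$ fails to become conservative. In $L^2(\Omega)$ alone the sequence above converges to $0$, which is curl-free, so the statement is only informative when convergence is read in a topology sensitive to first derivatives, e.g. $H(\mathrm{curl};\Omega)$, or, equivalently and more transparently, when its conclusion is phrased as ``$\nabla\times G_n$ does not converge to $0$ in $L^2(\Omega)$''; I would state it that way. Should one instead insist that $G_n$ itself have no limit, it suffices to replace $u_n\equiv0$ by a fixed SDF-like function $u$ and put $G_n=\nabla u+\tfrac1n\sin(nx_2)(1,0,0)$: then $\nabla u_n-G_n\to0$ and $G_n\to\nabla u$ in $L^2$, yet $\nabla\times G_n=(0,0,-\cos(nx_2))$ as before, so $(G_n)$ is not Cauchy in $H(\mathrm{curl};\Omega)$ and has no limit there. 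Second, if the counterexample should mirror the exact PINC-without-curl architecture rather than an abstract pair $(u_n,G_n)$, one can realize $G_n$ through the hard constraint \eqref{eq:GTheta} — with $p=\infty$, a unit oscillating field of the form $(\cos\theta_n,\sin\theta_n,0)$ with rapidly varying phase $\theta_n$ — and rerun the same estimates; but the statement as given does not require this. Overall the argument is a standard concentration-of-oscillation construction, so the genuine work is only in pinning down the sense of ``converge'' and verifying the chosen sequence witnesses it, which the above does.
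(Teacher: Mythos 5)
Your construction is essentially the paper's: you perturb $\nabla u_n$ by a low-amplitude, high-frequency oscillation $\tfrac1n\sin(n x_2)(1,0,0)$, exactly as the paper does with $G_n=\nabla u_n+(0,\tfrac1n\sin(2\pi n x),0)$ on $\Omega=[0,1]^3$, and the verification ($L^2$ penalty $\sim n^{-2}\to 0$ while $\|\nabla\times G_n\|_{L^2}^2$ stays bounded away from zero) is the same computation up to a change of coordinate and frequency normalization. Your added remark — that $G_n$ in fact converges in $L^2$ to a curl-free field, so the conclusion is only meaningful in a derivative-sensitive topology such as $H(\mathrm{curl};\Omega)$ (equivalently, the honest statement is ``$\nabla\times G_n\not\to 0$ in $L^2$'') — is a genuine and correct sharpening of an imprecision that the paper's own proof glosses over, but it does not change the underlying argument.
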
 
\begin{proof}
    For every $\left\{u_n\right\}_{n\in\mathbb{N}}$ defined on $\Omega=[0,1]^3$, set 
 \[
 G_n\left(x,y,z\right)=\nabla u_n\left(x,y,z\right) + \left(0,\frac{1}{n}\sin\left(2\pi nx\right),0\right)\in\mathbb{R}^3.
 \]
 Then, 
 \begin{align}
 \int_\Omega \left\Vert \nabla u_n - G_n\right\Vert^2 d\mathbf{x} & =\int_\Omega \left\Vert\left(0,\frac{1}{n}\sin\left(2\pi nx\right),0\right)\right\Vert^2  d\bx\\
 & = \frac{1}{n^2}\int_\Omega \sin^2\left(2\pi nx\right) d\bx\\
 & \rightarrow 0,
 \end{align}
 as $n\rightarrow\infty$.
 However, 
 \[
 \nabla\times G_n\left(x,y,z\right)=\left(0,0,\cos\left(2\pi nx\right)\right)
 \]
 does not converges to zero.
\end{proof}
\begin{remark}
 Note that for a $G_n$ set in the proof of the above theorem \ref{thm:curl_free}, $\int_\Omega\left\Vert\nabla\times G_n\right\Vert^2 d\bx=\frac{1}{2}$ is a positive constant independent of $n$. This implies that we can prevent the pathological example above by adding the curl-free loss term.
Therefore, the curl-free term is necessary to accurately learn the gradient field $G$.
\end{remark}

\subsection{Empirical Validation}
To examine the practical effect of the curl-free term on learning a conservative vector field, we include experimental results on a simple example of a sphere of radius 0.5 centered at the origin. 
Figure \ref{fig:ablation_curl_sphere} depicts the level set contours of the trained $u$ of a cross section cut at the planes $x=0.2$ and $0.4$, along with the vector field of the trained $G$ projected onto this plane together with the gradient field of the true SDF.
We note that the $p$-Poisson equation \eqref{eq:p_poisson} gives an SDF that is positive on the interior of the surface and negative on the outside, however, the contours depicted in Figures \ref{fig:ablation_curl_sphere} and \ref{fig:uniqueness} are of the opposite sign of the trained $u$.
As shown in the Figure \ref{fig:ablation_curl_sphere}, the model trained without the curl-free term learns a vector field $G$ that is not curl-free, resulting in $G$ being distinct from the true gradient field. This ultimately impedes $u$ from correctly learning the SDF. 
On the other hand, it is evident that the model trained with the curl-free term converges fairly close to the true gradient field. This ultimately helps $u$ to accurately learn the SDF.

\begin{figure}[t]
  \begin{center}
    \includegraphics[width=0.90\textwidth]{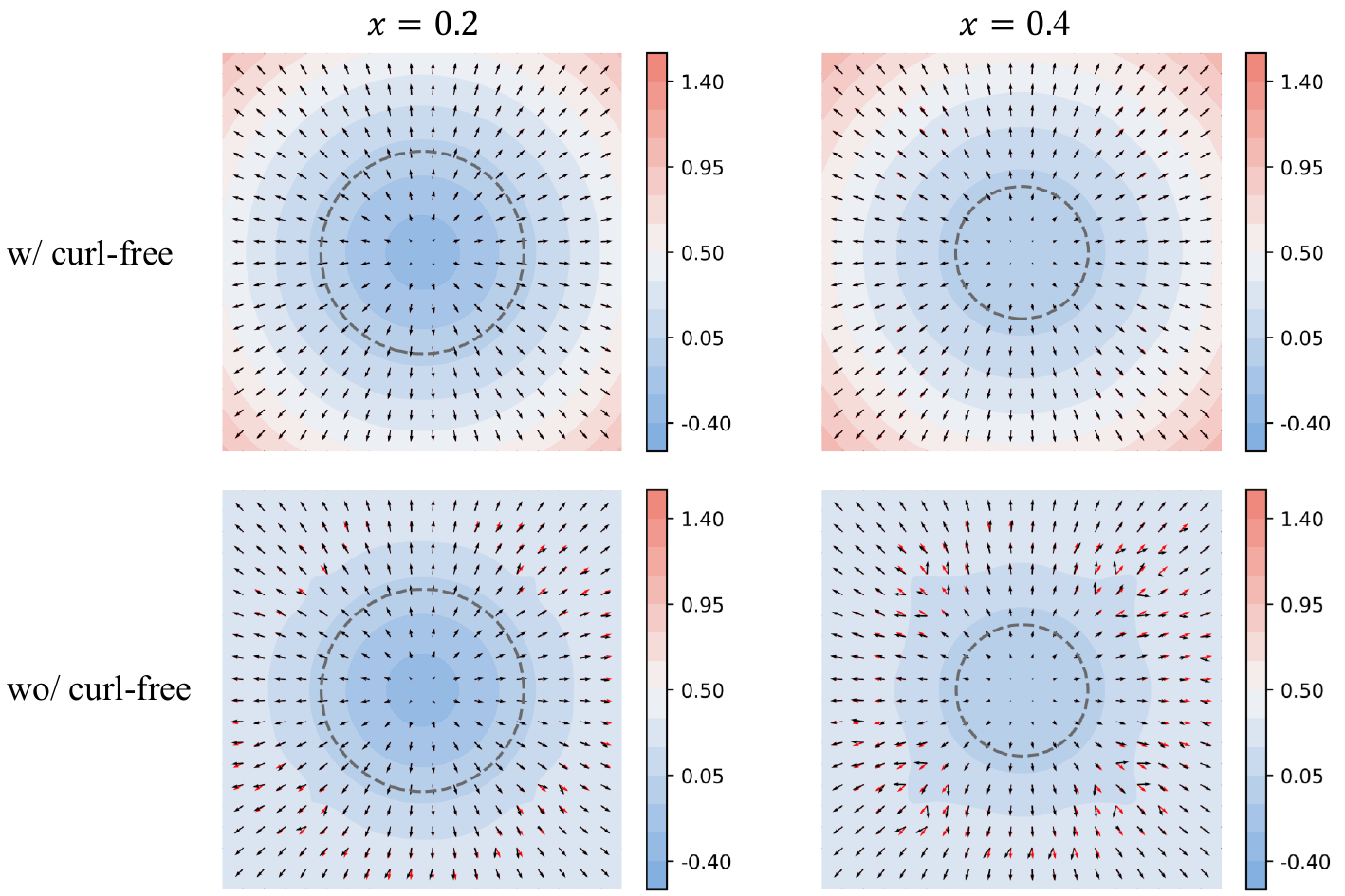}
  \end{center}
  \caption{The trained results of a cross section cut in planes $x=0.2$ (left) and $x=0.4$ (right). The level-sets show the signed distance fields $u$ learned by the proposed model with (top) and without (bottom) the curl-free term. Dashed contours depict the learned zero level set. Quivers represent the vector field of the trained auxiliary variable $G$ and the true gradient fields are plotted in red arrows.}
  \label{fig:ablation_curl_sphere}
\end{figure}

\section{Implementation Details}
In this section, we provide more details about the implementation for reproducibility.
Note that our code is built on top of IGR \footnote{\label{igr}\url{https://github.com/amosgropp/IGR}} (MIT License).
\subsection{Experimental Setup}
\paragraph{Parameter Tuning}
The proposed training loss $\cL_\text{total}$ \eqref{eq:loss_final} is a weighted sum of five loss terms with four regularization parameters $\lambda_1, \lambda_2, \lambda_3$, and $\lambda_4$.
In all surface reconstruction experiments, we use $\lambda_1=0.1,\ \lambda_2=0.0001,\ \lambda_3=0.0005$, and $\lambda_4=0.1$.
In the proposed model, $p$ is also a hyperparameter to be chosen.
Considering the theoretical fact that $p$ should be infinitely large and numerical simplicity, we set $p=\infty$.
We empirically confirm no significant difference between when $p=100$ and when $p=\infty$.
Moreover, we set the smoothing parameter $\epsilon=1$ for approximating Dirac delta in \eqref{eq:loss_final}.

\paragraph{Network Architecture}
As in previous studies \cite{park2019deepsdf, gropp2020implicit, lipman2021phase}, we represent the primary and auxiliary outputs by a single 8-layered multi-layer perceptron (MLP) $\bR^3\rightarrow\bR^7$ with 512 neurons and a skip connection to the fourth layer, but only the output dimension of the last layer is increased by six due to the two auxiliary variables; see Figure \ref{fig:network}.
We use softplus activation function $\alpha\left(x\right)=\frac{1}{\beta}\ln\left(1+e^{\beta x}\right)$ with $\beta=100$.
Network weights are initialized by the geometric initialization proposed in \cite{atzmon2020sal}.

\paragraph{Training details}
The gradient and the curl of networks are computed with auto-differentiation library (\texttt{autograd}) \cite{paszke2017automatic}.
In all experiments, we use the Adam optimizer \cite{kingma2014adam} with learning rate $10^{-3}$ decayed by $0.99$ every $2{\small,}000$ iterations.
At each iteration, we uniform randomly sample $16{\small,}384$ points $\bx\in\cX$ from the point cloud $\cX$.
We sample the collocation points of $\Omega$ as provided in \cite{gropp2020implicit}.
The collocation points consist of global points and local points.
The local collocation points are sampled by perturbing each of the $16{\small,}384$ points drawn from the point cloud with a zero mean Gaussian distribution with a standard deviation equal to the distance to the $50$th nearest neighbor.
The global collocation points are made up of approximately $2{\small,}000$ points from the uniform distribution $U\left(-\eta, \eta\right)$ with $\eta=1.1$. $F=\frac{1}{3}\mathbf{x}$ is utilized in all experiments.

\paragraph{Baseline models}
For baseline models on the Thingi10K dataset, we use the official codes of IGR \footref{igr}
(MIT License), SIREN\footnote{\url{https://github.com/vsitzmann/siren}} (MIT License), and DiGS \footnote{\url{https://github.com/Chumbyte/DiGS}} (MIT License).
We faithfully follow the official implementation to train each model without  normal prior.
For the variable splitting representation of the eikonal equation \eqref{eq:eik_naive}, there is a single auxiliary output. Consequently, we use the same 8 layer MLP with 512 nodes, but a network with an output dimension of 4.
We normalize the auxiliary output to make it a unit norm, and use the normalized one to represent $H$.

\subsection{Evaluation}\label{appen:metrics}
\paragraph{Metrics}
We measure the distance between two point clouds $\cX$ and $\cY$ by using the standard one-sided and double-sided $\ell_1$ Chamfer distances $d_{\overrightarrow{C}},$ $d_C$ and Hausdorff distances $d_{\overrightarrow{H}}$, $d_H$. Each are defined as follows:
\begin{align*}
d_{\overrightarrow{C}}\left(\cX,\cY\right) & = \frac{1}{\left\vert\cX\right\vert}
\sum_{\bx\in\cX}\underset{\by\in\cY}{\min}\left\Vert\bx-\by\right\Vert_2,\\
    d_C\left(\cX,\cY\right) & = \frac{1}{2}\left(d_{\overrightarrow{C}}\left(\cX,\cY\right)+d_{\overrightarrow{C}}\left(\cY,\cX\right)\right),\\
    d_{\overrightarrow{H}}\left(\cX,\cY\right)&=\underset{\bx\in\cX}{\max}\ \underset{\by\in\cY}{\min}\left\Vert\bx-\by\right\Vert_2,\\
    d_H\left(\cX,\cY\right) & = \max\left\{d_{\overrightarrow{H}}\left(\cX,\cY\right)+d_{\overrightarrow{H}}\left(\cY,\cX\right)\right\}.
\end{align*}
When we estimate the distance from a surface, we sample $10M$ uniformly random points from the surface and then measure the distance from the sampled point clouds by the metrics defined above.

Furthermore, in order to measure the accuracy of the trained gradient field, we evaluate Normal Consistency (NC) \cite{mescheder2019occupancy} between the learned $G$ and the surface normal as follows: from given an oriented point cloud $\cX=\left\{\bx_i, \bn_i\right\}_{i=1}^N$ comprising of sampled points $\bx_i$ and the corresponding outward normal vectors $\bn_i$, NC is defined by
\begin{equation}\label{eq:nc}
NC\left(G,\bn\right)=\frac{1}{N}\sum_{i=1}^N\left\vert G\left(\bx_i\right)^{\text{T}
}\bn_i \right\vert,
\end{equation}
the average of the absolute dot product of the trained $G$ and the surface normals.

\paragraph{Level set extraction}
We extract the zero level set of a trained neural network $u$ by using the classical marching cubes meshing algorithm \cite{lorensen1987marching} on a $512\times 512 \times 512$ uniform grid.

\section{Additional Results} \label{appen:results}
\subsection{Uniqueness of the solution of $p$-Poisson equation}\label{appen:uniqueness}
\begin{figure}[t]
  \begin{center}
    \includegraphics[width=0.90\textwidth]{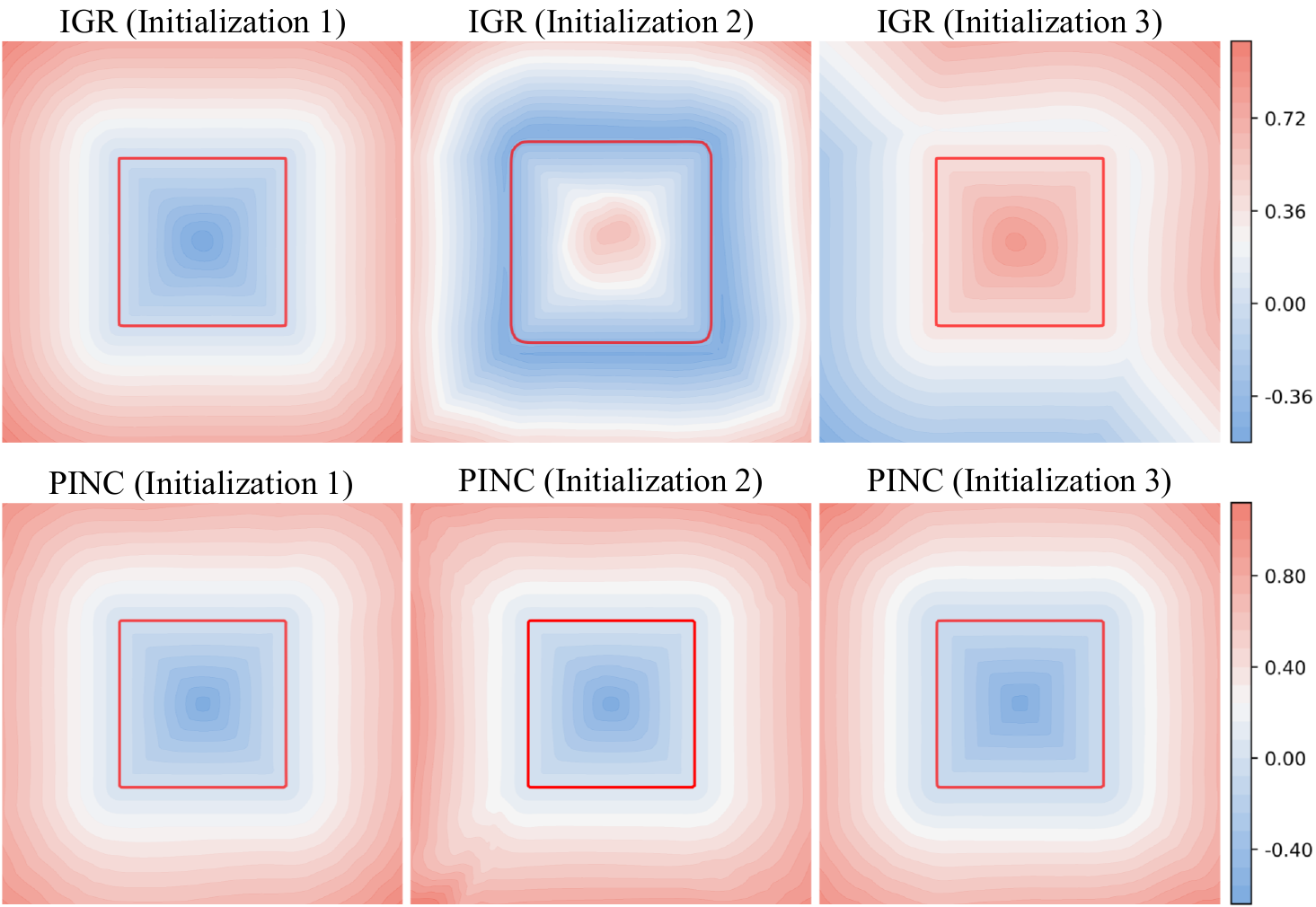}
  \end{center}
  \vspace{-5pt}
  \caption{Experimental results show whether a method can find the SDF from different network initializations. IGR and PINC are trained on the synthetic cube data with three network initializations: geometric initialization (initialization 1) and Kaiming initialization with two different random seeds (initializations 2 and 3). Each depicts the trained level-set contours of a cross-section cut in the plane x = 0. Red contours depict the trained zero level set of numerical solutions.}
  \vspace{-10 pt}
  \label{fig:uniqueness}
\end{figure}
In this section, we provide a numerical example supporting the strength of the proposed model regarding the uniqueness of the solution to the $p$-Poisson equation. The given points are located on a cube centerd at the origin with an edge of the length $1$. We consider IGR \cite{gropp2020implicit} as an eikonal-rooted baseline, and we train IGR and the proposed PINC with the following three different network initializations: The geometric initialization [23] that IGR originally used and the Kaiming uniform initialization \cite{he2015delving} with two different random seeds. 
The results are summarized in Figure \ref{fig:uniqueness}.
The results show that IGR converges to different solutions depending on the model initializations. In particular, IGR fails to learn the SDF of the cube except for the geometric initialization. On the other hand, the results of PINC with the same initializations show that the proposed model converges to the SDF in all three cases. The numerical results of the chosen example show that the proposed method can pursue the unique solution of the PDE.

\subsection{Additional comparison with models utilizing surface normals}
In Section \ref{sec:experiments}, we made a comparison with models that do not use the surface normal $\mathbf{n}$ as a supervision. Here, we additionally consider a comparison with models that leverage normal supervision. We consider three baseline models as follows: (i) IGR that evaluates using the surface normal, (ii) VisCo \cite{pumarola2022visco}, a grid-based method based on the viscosity regularized eikonal equation, and (iii) Shape As Points (SAP) \cite{peng2021shape}, a model that revisits the classical Poisson Surface Reconstruction (PSR) \cite{kazhdan2006poisson} using deep learning.
The results are reported in Table \ref{tab:SRB_normals}. 
We can see that the proposed model performs on par with baselines, despite not utilizing the surface normal.
Considering that all of the baselines are PDE-based INR models, the results exhibit the effectiveness of the proposed model \eqref{eq:loss_PINC} in reconstructing a surface from the sole use of raw point clouds.

\begin{table}[t]
    \centering
    \caption{Comparison with models that use the surface normal supervision $\mathbf{n}$ on SRB. The proposed model PINC did not utilize the surface normal.} \label{tab:SRB_normals}
      \setlength\tabcolsep{2.9pt}
     \scalebox{0.75}{
    \begin{tabular}{c|lcccc|cccc|cccc|cccc|cccc}
    \toprule  
    &  & \multicolumn{4}{c}{Anchor} & \multicolumn{4}{c}{Daratech} & \multicolumn{4}{c}{DC} & \multicolumn{4}{c}{Gargoyle} & \multicolumn{4}{c}{Loard Quas} \\
    & & \multicolumn{2}{c}{GT} &  \multicolumn{2}{c}{Scans} & \multicolumn{2}{c}{GT} &  \multicolumn{2}{c}{Scans} & \multicolumn{2}{c}{GT} &  \multicolumn{2}{c}{Scans} & \multicolumn{2}{c}{GT} &  \multicolumn{2}{c}{Scans} & \multicolumn{2}{c}{GT} &  \multicolumn{2}{c}{Scans} \\
    & Model & $d_C$ & $d_H$ & $d_{\overrightarrow{C}}$ & $d_{\overrightarrow{H}}$ & $d_C$ & $d_H$ & $d_{\overrightarrow{C}}$ & $d_{\overrightarrow{H}}$ & $d_C$ & $d_H$ & $d_{\overrightarrow{C}}$ & $d_{\overrightarrow{H}}$ & $d_C$ & $d_H$ & $d_{\overrightarrow{C}}$ & $d_{\overrightarrow{H}}$ & $d_C$ & $d_H$ & $d_{\overrightarrow{C}}$ & $d_{\overrightarrow{H}}$\\
    \midrule

& VisCO & 0.21 & 3.00 & 0.15 & 1.07 & 0.26 & 4.06 & 0.14 & 1.76 & 0.15 & 2.22 & 0.09 & 2.76 & 0.17 & 4.40 & 0.11 & 0.96 & 0.12 & 1.06 & 0.7 & 0.64 \\
w/$\mathbf{n}$ & IGR & 0.22 & 4.71 & 0.12 & 1.32 & 0.25 & 4.01 & 0.08 & 1.59 & 0.17 & 2.22 & 0.09 & 2.61 & 0.16 & 3.52 & 0.06 & 0.81 & 0.12 & 1.17 & 0.07 & 0.98\\
&    SAP & 0.34&	8.83	&0.09	&2.93 & 0.22 &	3.09&	0.08&	1.66 &0.17&	3.30&	0.04	&2.23& 0.18	&5.54&	0.05	&1.73 &0.13&	3.49&	0.04	&1.17\\
    \midrule
wo/$\mathbf{n}$ & \textbf{PINC} & 0.29 & 7.54 & 0.09 & 1.20 & 0.37 & 7.24 & 0.11 & 1.88 & 0.14 & 2.56 & 0.04 & 2.73 & 0.16 & 4.78 & 0.05 & 0.80 & 0.10 & 0.92 & 0.04 & 0.67\\
   \bottomrule
  \end{tabular}}
\end{table} 

It is worth note that the proposed model may be interpreted as PSR because of \eqref{eq:loss_wo_curl}. More precisely, the Euler-Lagrange equation of \eqref{eq:loss_wo_curl} says that the variational problem \eqref{eq:loss_wo_curl} for finding a scalar function $u$ whose gradient best approximates a given vector field $G$ transforms into the following Poisson problem:
\begin{equation}
\begin{cases}
\triangle u=\nabla\cdot G &\text{in }\Omega\\
u = 0&\text{on }\Gamma.
\end{cases}
\end{equation}
In the conventional PSR, the gradient field is set to surface normals.
Thus, the auxiliary variable $G$ can be regarded as playing a role of surface normals for PSR.
However, the vector field $G$ in the proposed model is not obtained from the oriented point cloud, but the learnable function that is trained with $u$ at the same time. Moreover, since we bake the $p$-Poisson equation into $G$ as a hard constraint in \eqref{eq:GTheta}, we obtain a continuous SDF rather than an indicator function like PSR and SAP. The results confirm that simultaneous training of the gradient field and the SDF, that is, the variable splitting method, achieves similar or even better surface restoration than SAP, even without using the given surface normal $\mathbf{n}$.

\subsection{Additional quantitative results}
We reported Chamfer distances and Hausdorff distances in Tables \ref{tab:SRB} and \ref{tab:Thingi10K}, but these two metrics do not reflect the complete quality of the restored surface. Here, we evaluate Normal Consistency (NC) \eqref{eq:nc} which measures how well the model can capture higher order information of the surface. The results on both SRB and Thingi10K datasets are summarized in Tables \ref{tab:SRB_NC} and \ref{tab:Thingi10K_NC}, respectively.
Overall, the proposed model achieves a better NC score than baseline models.
In particular, the results show that the proposed model achieves superior NC for the tested examples than SAP, even though it does not employ surface normal supervision.

\begin{table}
    \centering
    \caption{Normal consistency of reconstructed surfaces on SRB.} \label{tab:SRB_NC}
      \setlength\tabcolsep{2.9pt}
    \begin{tabular}{cccccc}
    \toprule  
     Model & Anchor & Daratech & DC & Gargoyle & Lord Quas\\
    \midrule
    IGR&0.9706&0.8526&0.9800&0.9765&0.9901\\
SIREN&0.9438&\textbf{0.9682}&0.9735&0.9392&0.9762\\
DiGS&\textbf{0.9767}&0.9680&0.9826&0.9788&0.9907\\
SAP&0.9750&0.9414&0.9636&0.9731&0.9838\\
\textbf{PINC}&0.9754&0.9311&\textbf{0.9828}&\textbf{0.9803}&\textbf{0.9915}\\
    \bottomrule
  \end{tabular}
\end{table} 

\begin{table}
    \centering
    \caption{Normal consistency of reconstructed surfaces on Thingi10K.} \label{tab:Thingi10K_NC}
      \setlength\tabcolsep{2.9pt}
    \begin{tabular}{cccccc}
    \toprule  
     Model & Squirrel & Pumpkin & Frogrock & Screstar & Buser head \\
    \midrule
IGR&\textbf{0.9820}&0.9565&0.9509&0.9709&0.9249\\
SIREN&0.9529&0.8996&0.9035&0.9142&0.8860\\
DiGS&0.9557&0.9353&0.9468&0.9386&0.9171\\
SAP&0.9791&0.9520&0.9319&0.9767&0.9004\\
\textbf{PINC}&0.9816&\textbf{0.9583}&\textbf{0.9545}&\textbf{0.9805}&\textbf{0.9376}\\
    \bottomrule
  \end{tabular}
\end{table} 
Moreover, we measure Chamfer distance and Hausdorff distance for ablation studies reported in Figures \ref{fig:ablation_area} and \ref{fig:ablation_p} of Section \ref{sec:ablation} and summarized them in the Tables \ref{tab:ablation_area} and \ref{tab:ablation_p}, respectively.
\begin{table}
    \centering
    \caption{Quantitative results on the effect of area loss on daratech.}\label{tab:ablation_area}
       \setlength\tabcolsep{25.0pt}
     \scalebox{0.90}{\begin{tabular}{lcccc}
    \toprule  
     & \multicolumn{2}{c}{GT} &  \multicolumn{2}{c}{Scans}  \\
     Model & $d_C$ & $d_H$ & $d_{\overrightarrow{C}}$ & $d_{\overrightarrow{H}}$  \\
    \midrule
    PINC wo/ area loss & 4.26 & 53.34 & 0.20 & 2.81 \\
    PINC w/ area loss & 0.37 & 7.24 & 0.11 & 1.88 \\
    \bottomrule
  \end{tabular}}
  \vspace{-5pt}
\end{table} 

\begin{table}
    \centering
    \caption{Quantitative results on the ablation study of $p$.}\label{tab:ablation_p}
       \setlength\tabcolsep{13.0pt}
     \scalebox{0.85}{\begin{tabular}{lcccc|ccccc}
    \toprule  
     & \multicolumn{4}{c}{Gargoyle} & \multicolumn{4}{c}{Anchor} \\
     & \multicolumn{2}{c}{GT} &  \multicolumn{2}{c}{Scans} & \multicolumn{2}{c}{GT} &  \multicolumn{2}{c}{Scans}  \\
     Model & $d_C$ & $d_H$ & $d_{\overrightarrow{C}}$ & $d_{\overrightarrow{H}}$ & $d_C$ & $d_H$ & $d_{\overrightarrow{C}}$ & $d_{\overrightarrow{H}}$ \\
    \midrule
    $p=2$ & 3.96 & 43.13 & 0.51 & 6.31 & 4.32 & 46.66 & 0.77 & 14.58\\
    $p=10$ & 0.22 & 8.14 & 0.10 & 1.19 & 0.50 & 7.24 & 0.12 & 3.02\\
    $p=100$ & 0.17 & 4.90 & 0.10 & 0.82 & 0.31 & 7.20 & 0.13 & 1.80 \\
    $p=\infty$ & 0.16 & 4.78 & 0.05 & 0.80 & 0.29 & 7.19 & 0.11 & 1.17 \\
    \bottomrule
  \end{tabular}}
  \vspace{-5pt}
\end{table}

\paragraph{More results on effect of $p$}
\begin{figure}
  \begin{center}
    \includegraphics[width=0.95\textwidth]{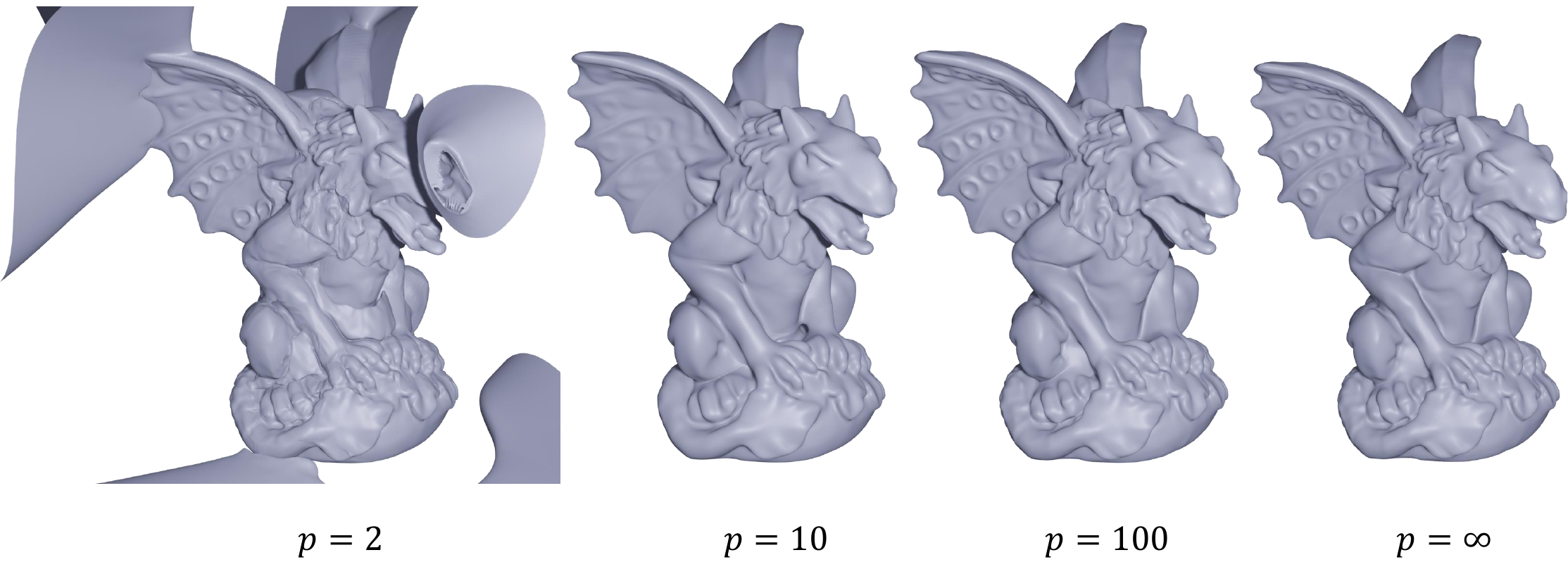}
  \end{center}
  \caption{Quality of surface reconstruction with varying $p$ from $p=2$ to $p=\infty$.}
  \label{fig:ablation_p_gargoyle}
\end{figure}
Theoretically, an accurate SDF can be obtained as $p$ grows infinitely.
That the same story continues in practice is confirmed by the results shown in Figure \ref{fig:ablation_p_gargoyle}.
We can see that the larger $p$ induces a better reconstruction.
This phenomenon is also observed in Figure \ref{fig:ablation_p}.
Moreover, it can be seen that $p=\infty$, which we used in the implementation, gives a similar qualitative result to $p=100$.
These experimental results once again remind us how important it is to be able to use a large $p$.

Furthermore, we provide numerical verification for the use of $p=\infty$ in Figure \ref{fig:ablation_p_MSE}.
For notational convenience, we use the subscript $u_p$ to denote the dependence of the solution on the parameter $p$.
Figure \ref{fig:ablation_p_MSE} depicts graphs of the mean squared error (MSE) of $u_p$ and $u_\infty$ over different $p$.
MSEs are computed by discretizing the computational domain $\Omega$ into a $100\times100\times100$ uniform grid.
The results show that the MSE decreases as $p$ increases.
In other words, it confirms that $u_p$ is getting closer to $u_\infty$ as $p$ grows, which supports the justification for using $p=\infty$.
\begin{figure}
  \begin{center}
    \includegraphics[width=0.7\textwidth]{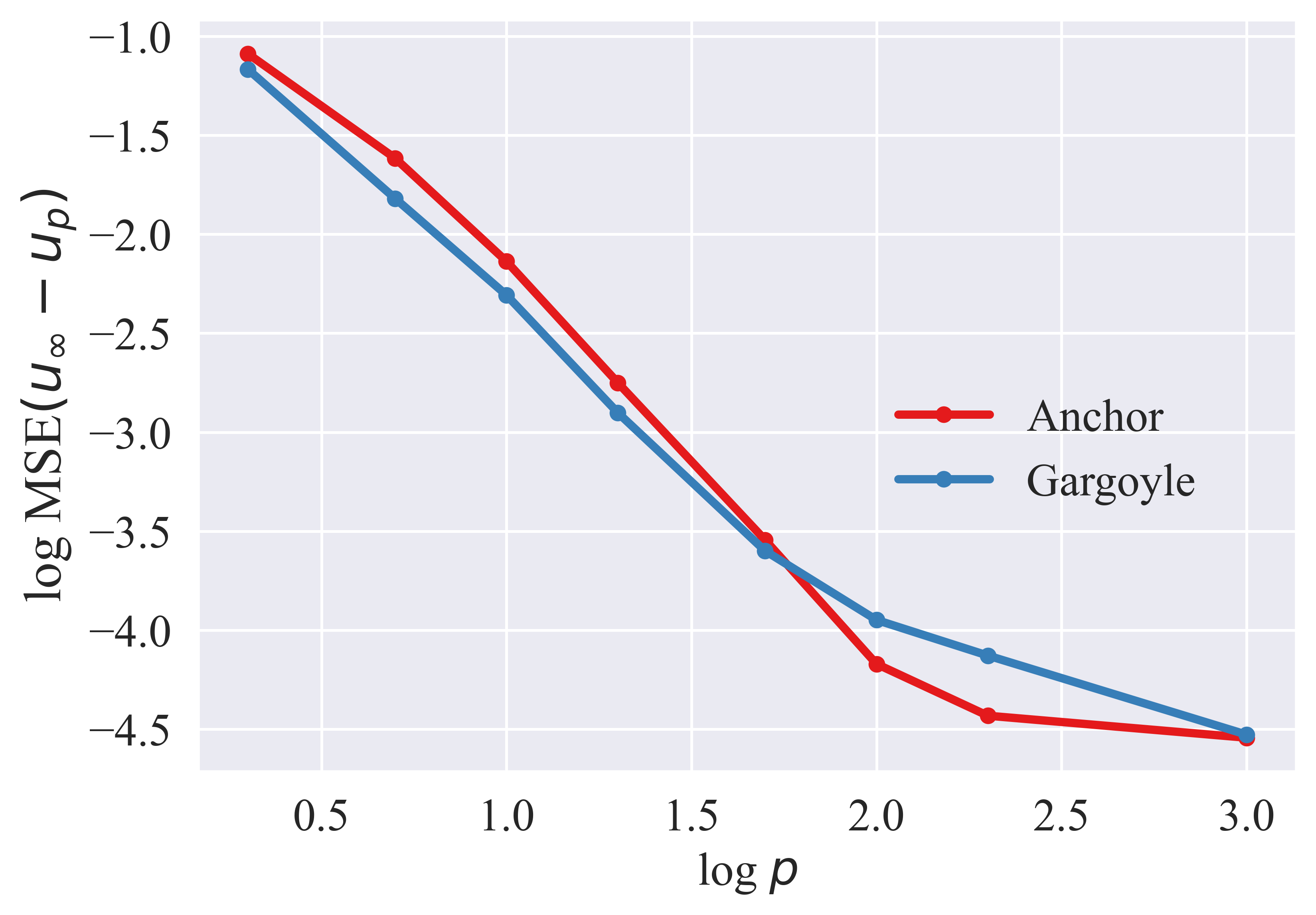}
  \end{center}
  \caption{MSEs of $u_p$ and $u_\infty$ over different $p$.}
  \label{fig:ablation_p_MSE}
\end{figure}

\subsection{Additional qualitative results}
Figure \ref{fig:reconstruction_appendix} provides additional qualitative results of surface reconstruction on SRB and Thingi10K discussed
in Section \ref{sec:surface_reconstruction}.
\begin{figure}
  \begin{center}
       \includegraphics[width=0.99\textwidth]{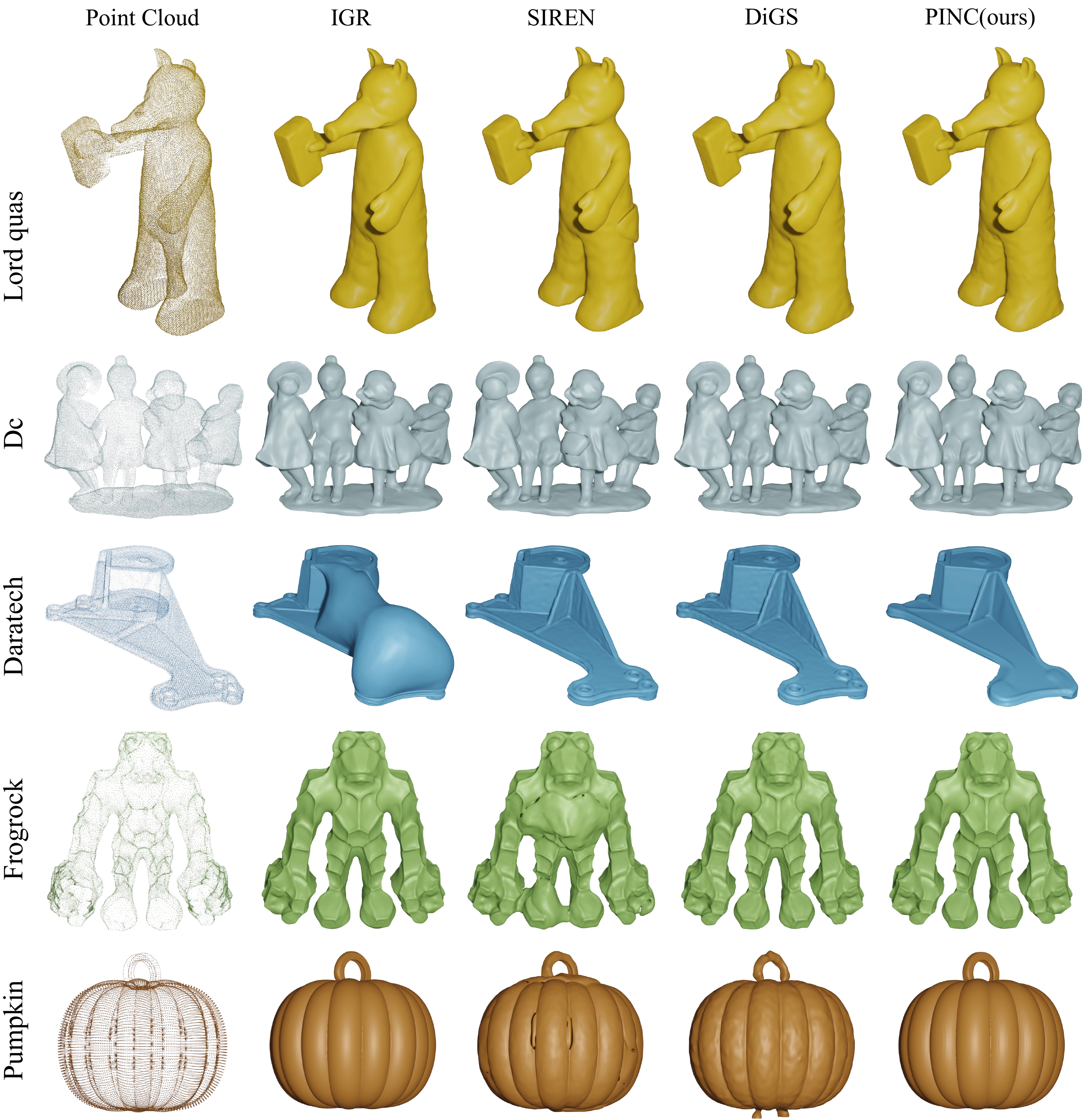}
  \end{center}
  \caption{Additional qualitative results of the surface reconstruction on SRB and Thingi10K datasets.}
  \label{fig:reconstruction_appendix}
\end{figure}
\paragraph{Reconstruction of large point clouds}
\begin{figure}
  \begin{center}
    \includegraphics[width=0.95 \textwidth]{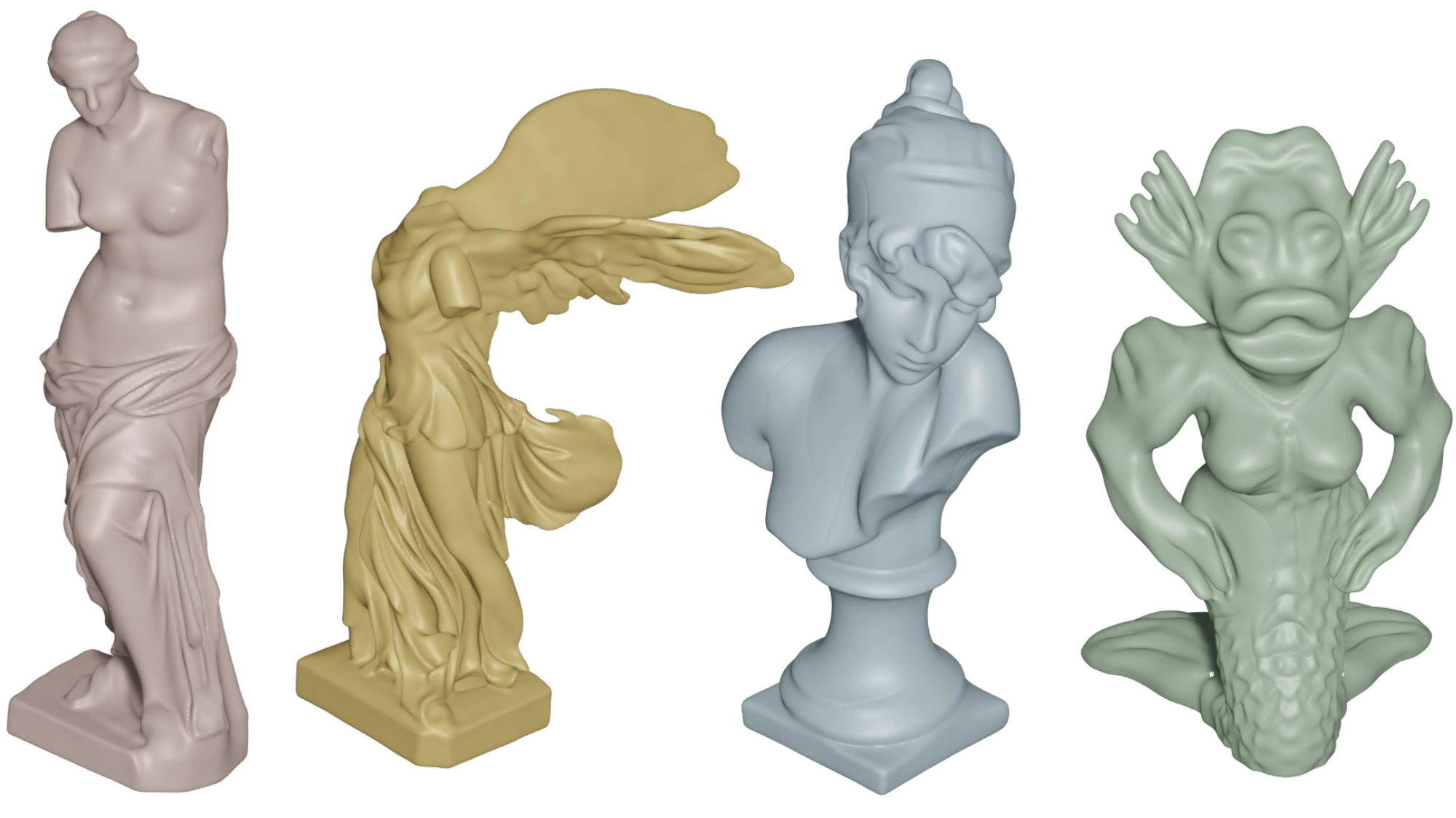}
  \end{center}
  \caption{Reconstructed surfaces of large point clouds from Thingi10K.}
  \label{fig:large}
\end{figure}

We further provide qualitative results for surface reconstruction from large models taken from Thingi10K.
The adopted point clouds consist of from 35K to 980K vertices.
Figure \ref{fig:large} depicts the qualitative reconstruction results of PINC on these large point clouds. The model is trained with the same configuration used in Section \ref{sec:surface_reconstruction}.

\subsection{Training/Inference time}
To investigate the computational time of the proposed model, We carefully measured average execution time compared to baselines.
In the Table \ref{tab:TrInfTime}, we report the average training time per iteration and inference time at a resolution of $32^3$ voxels. 
The proposed model requires more computational cost than baseline models because of the computation on curl using automatic differentiation.
\begin{table}
    \centering
    \caption{Training and inference times for surface reconstruction on SRB.}\label{tab:TrInfTime}
      \setlength\tabcolsep{2.9pt}
    \begin{tabular}{cccccc}
    \toprule  
     Model & IGR & SIREN & DiGS & \textbf{PINC}\\
    \midrule
    Training time (ms/iteration) & 48.34 & 13.11 & 52.34 & 295.01\\
    Inference time (ms) & 6.86 & 3.51 & 4.39 & 6.93 \\
    \bottomrule
  \end{tabular}
\end{table}

\end{document}